\newtheorem{proposition}{Proposition}
\newtheorem{definition}{Definition}
\newtheorem{remark}{Remark}%
\title{SETrLUSI: Stochastic Ensemble Multi-Source Transfer Learning Using Statistical Invariant}
\author{
	Chunna Li \\ % 姓名
	Hainan University\\ % 机构
	\texttt{na1013na@163.com} \\ % 邮箱
	\And % 多个作者的分隔符
	Yiwei Song \\ % 姓名
	Hainan University \\ % 机构
	\texttt{syw0917@163.com} \\ % 邮箱
	\And % 多个作者的分隔符
	Yuanhai Shao* \\ % 姓名
	Hainan University \\ % 机构
	\texttt{shaoyuanhai21@163.com} \\ % 邮箱
}
\begin{document}
	\maketitle		
	\begin{abstract}
%		In transfer learning, a source domain often carries diverse knowledge, and different domains usually emphasize different types of knowledge. However, most of the existing methods typically handle only a single type of knowledge from all domains, limiting their effectiveness in tasks. By introducing a weak mode of convergence in the form of Statistical Invariant (SI), this paper proposes a novel ensemble learning framework, formulated as Stochastic Ensemble Multi-Source Transfer Learning Using Statistical Invariant (SETrLUSI). Variant types of knowledge from both source domains and target domain are extracted and integrated by SI, the introduction of which not only exploits different types of knowledge effectively but also expedites the learning process. To accelerate the training speed as well as enhance the model stability, stochastic selection of SI, the proportional sampling of the source domains, and bootstrapping of the target domain are employed in the SETrLUSI. Experiments show that SETrLUSI has good convergence and outperforms related methods with a lower time cost.
	In transfer learning, a source domain often carries diverse knowledge, and different domains usually emphasize different types of knowledge. Different from handling only a single type of knowledge from all domains in traditional transfer learning methods, we introduce an ensemble learning framework with a weak mode of convergence in the form of Statistical Invariant (SI) for multi-source transfer learning, formulated as Stochastic Ensemble Multi-Source Transfer Learning Using Statistical Invariant (SETrLUSI). The proposed SI extracts and integrates various types of knowledge from both source and target domains, which not only effectively utilizes diverse knowledge but also accelerates the convergence process. Further, SETrLUSI incorporates stochastic SI selection, proportional source domain sampling, and target domain bootstrapping, which improves training efficiency while enhancing model stability. Experiments show that SETrLUSI has good convergence and outperforms related methods with a lower time cost.

	\end{abstract}

\section{Introduction}
Transfer learning, as an emerging learning paradigm, aims to leverage knowledge from related domains (source domains) to address the same or similar tasks in another domain (target domain) \cite{chen2023transfer, li2023novel}, thereby reducing the number of labeled samples required for the target task \cite{zhuang2020comprehensive}. \par
%		\begin{figure}[H]
	%			\centering
	%			\includegraphics[height=0.17\textheight]{transfer.pdf}
	%			\caption{A simple illustration of knowledge transfer.}
	%			\label{example}
	%		\end{figure}

%In transfer learning, the primary challenge lies in how to determinate the type of knowledge to transfer, which has been solved by many researchers from different aspects \cite{bruzzone2009domain,wachinger2016domain,jing2020adaptive,yao2022discriminative,zhuang2017supervised,yang2007cross, loey2021hybrid}. 
%\begin{figure}[H]
%	\centering
%	\includegraphics[scale=1.2]{int.pdf}
%	\caption{Two commonly used knowledge transfer types}
%	\label{Int}
%\end{figure}
Multiple source domains usually contain various types of knowledge, even within a single domain \cite{HUANG2023110233, zhao2023novel}. Multi-source domain methods can usually be categorized into two types. The first type methods transfer the same type of knowledge from different source domains, such as Sample type of knowledge \cite{gao2021multi, gao2023integrating, gao2024multi}, Feature type of knowledge \cite{ren2022multi, xu2023multi, li2024subspace, wu2020iterative, zhang2022data}, Parameter type of knowledge \cite{10172347}, and Model type of knowledge \cite{zheng2023multiple, bai2022three}. Those approaches merely increase the number of source domains while maintaining the same knowledge type, without fundamentally expanding the categories of knowledge. To address the issue of knowledge homogeneity, \cite{zhou2021domain, nguyen2021stem,xu2024multi, li2024multi} try to extract different types of knowledge from different source domains. However, they fail to consider intra-domain knowledge diversity within individual source domains and also ignore the more effective utilization of the target domain.

Motivated by these observations, Stochastic Ensemble Multi-Source Domain Transfer Learning Using Statistical Invariant (SETrLUSI) is proposed to fully exploit the various types of knowledge from source domains as well as the target domain by utilizing Statistical Invariant (SI) and the stochastic ensemble. In summary, the main contributions of this paper are as follows:
\begin{itemize}
	\item By embedding knowledge in the way of the weak mode of convergence in a Hilbert space as statistical invariant constraints, it enables extracting diverse forms of knowledge from each heterogeneous source domain simultaneously. Moreover, the utilization of statistical invariants shrinks the admissible function set, thereby facilitating the learning process of the algorithm.
	
	\item  A stochastic transfer ensemble learning framework for multi-source domain problem is further put forward based on the above transfer learning using statistical invariants, which bootstraps samples from source domains for more diverse knowledge to form the weak learners, proportional samples the target domain for more concentration on the target domain, and random selects the statistical invariants for effective knowledge extraction. Experiments on real datasets demonstrate the convergence behavior of our method, as well as the good performance with low computation cost. 
	
\end{itemize}

\section{Stochastic ensemble muli-source domain transfer learning using statistical invariant}
%Based on the aforementioned limitations in current research, we formally propose Multi-Source Stochastic Transfer Learning Using Statistical Invariant (SETrLUSI) and its corresponding ensemble algorithm for addressing these issues.
%\subsection{Preliminary}
%Typically, a domain usually is defined by two principal components: the feature space of inputs $\mathcal{X}$ and the marginal probability distribution of inputs $P(\boldsymbol{X})$ \cite{wang2022contrastive}. Here $\boldsymbol{X}\in\mathcal{X}$ represents a set of learning samples, while $\boldsymbol{Y}$ denotes the corresponding label vector associated with $\boldsymbol{X}$. This study is devoted to address a binary classification transfer problem with multi-source domain $D_{\mathcal{S}^i}, i=1,...,N$ and one target domain $\mathcal{D_T}$. \par	

In this paper, we consider the binary transfer learning classification problem. Let $\mathcal{Q}_i$ and $\mathcal{P}$ represent the marginal probability distributions of the $i$-th source domain and the target domain, respectively. Assume that $\mathcal{Q}_i\ne \mathcal{P}$, but $P(\boldsymbol{Y}_{\mathcal{S}^i}|\boldsymbol{X}_{\mathcal{S}^i})=P(\boldsymbol{Y}_\mathcal{T}|\boldsymbol{X}_\mathcal{T})$, where $\boldsymbol{X}\subset \mathbb{R}^d$ represents a set of learning samples, $\boldsymbol{Y}$ denotes the corresponding label vector associated with $\boldsymbol{X}$, and the subscript $\mathcal{S}^i$ or $\mathcal{T}$ denotes the source domain or the target domain. Denote the datasets come from the $i$-th source domain $\mathcal{S}^i$ as $D_{\mathcal{S}^i}=\{(\boldsymbol{x}_{\mathcal{S}^{i}_k},y_{\mathcal{S}^{i}_k})\}_{k=1}^{N}$, where $\boldsymbol{x}_{\mathcal{S}^{i}_k} \in \boldsymbol{X}_{\mathcal{S}^i}$, $y_{\mathcal{S}^{i}_k} \in \{0,1\}$ being the label for $\boldsymbol{x}_{\mathcal{S}^{i}_k}$.
Similarly, the labeled and unlabeled datasets from the target domain are denoted by $D_{\mathcal{T}^l}=\{(\boldsymbol{x}_{\mathcal{T}^l_k},y_{\mathcal{T}^l_k})\}_{k=1}^{q}$ and $D_{\mathcal{T}^u}=\{\boldsymbol{x}_{\mathcal{T}^u_k}\}_{k=q+1}^{m}$ respectively, where $\boldsymbol{x}_{\mathcal{T}^l_k},\boldsymbol{x}_{\mathcal{T}^u_k} \in \boldsymbol{X}_\mathcal{T}$ and  $y_{\mathcal{T}_{k}^l} \in \{0,1\}$. Our goal is to predict the label of $D_{\mathcal{T}^u}$ by training a learner from $D_{\mathcal{T}^l}$. For convenience, the symbols without subscript $\mathcal{T}$ and $l$ are considered as belonging to the labeled target domain.
%Especially, excepting LUSI and our method, labels of the other methods belong to $\{-1,1\}$. All vectors in the paper are column ones.
%For convenience, some of the basic notations are given in Tabel \ref{notation} in the following context, and the symbols without subscript are considered as belonging to the target domain.
	\subsection{Knowledge transfer based on the weak mode of convergence}
		In classification tasks, compared to hard labels, it is preferable for the learner to provide a probability that an input $\boldsymbol{x}$ belongs to a given class \cite{platt1999probabilistic}. Therefore, our goal is to estimate the probability decision function $f(\boldsymbol{x})=P(y=1|\boldsymbol{x})=\boldsymbol{w}^\top\boldsymbol{x}+b$, which classifies an unlabeled target domain sample $\boldsymbol{x}$ as class 1 if $f(\boldsymbol{x})\ge0.5$ and 0 otherwise. In order to estimate  $f(\boldsymbol{x})$, noting the equality $P(y=1|\boldsymbol{x})P(\boldsymbol{x})=P(y=1,\boldsymbol{x})$, it naturally yields the following Fredholm integral equation	
		\begin{equation}\label{frd}
			\int 			\boldsymbol{\theta}(\boldsymbol{x}-\boldsymbol{\hat{x}})f(\boldsymbol{\hat{x}})\text{d}P(\boldsymbol{\hat{x}})=\int\boldsymbol{\theta}(\boldsymbol{x}-\boldsymbol{\hat{x}})\text{d}P(y=1,\boldsymbol{\hat{x}}),
		\end{equation}		
		where $\boldsymbol{\theta} (\boldsymbol{x}-\boldsymbol{\hat{x}}) =\prod_{k=1}^n{\theta ( x^k-\hat{x}^{k} )}$, 
		$\theta (x^k-\hat{x}^k) =\left\{ \begin{array}{l}
			1,~ \text{if}~x^k\ge 	\hat{x}^k\\
			0,~ \text{if}~x^k <\hat{x}^k\\
		\end{array} \right.$, $x^k$ and $\hat{x}^k$ are the $k$-th 	component of $\boldsymbol{x}$ and $\boldsymbol{\hat{x}}$, respectively 	\cite{vapnik2015v}. Thus, to estimate the conditional probability 
		$f(\boldsymbol{x})$, we aim to minimize a loss $\ell (\cdot)$ between the left and right hands in \eqref{frd}, that is 
		\begin{equation}\label{VSVM}
			\underset{f}{\textup{min}}~\int \ell\left(\int 					\boldsymbol{\theta}(\boldsymbol{x}-\boldsymbol{\hat{x}})f(\boldsymbol{\hat{x}})\text{d}P(\boldsymbol{\hat{x}}),\,\int\boldsymbol{\theta}(\boldsymbol{x}-\boldsymbol{\hat{x}})\text{d}P(y=1,\boldsymbol{\hat{x}})\right)\sigma(\boldsymbol{x})\text{d}\mu(\boldsymbol{x}),
		\end{equation}
		where $\sigma(\boldsymbol{x})$ is a given weight function and $\mu(\boldsymbol{x})$ is a probability measure. Solving problem \eqref{VSVM} to get $f(\boldsymbol{x})$ is considered as a means of achieving the strong mode of convergence. To fully exploit knowledge provided by multiple source domains and the target domain, and also to overcome the limitation of insufficient knowledge utilization in the existing methods, we introduce the weak mode of convergence. 
%		providing a more flexible and theoretically grounded approach to knowledge transfer under data-scarce conditions.
		\begin{definition}{\cite{vapnik2020complete}}
%			{}
		The sequence of functions $\{f_h(\boldsymbol{x})\}_{h=1}^{\infty}$ converges to a function $f(\boldsymbol{x})$ in the weak mode if
		\begin{equation}\label{weakcon}
			\underset{h\rightarrow \infty}{\lim} \langle \psi(\boldsymbol{x}) ,\left\{ f_h(\boldsymbol{x})-f(\boldsymbol{x})\right\}\rangle =0,~\forall~\psi(\boldsymbol{x})\in L_2,
		\end{equation}
		where $L_2$ denotes the space of square-integrable functions.
%			\begin{equation}\label{weakcon}
%				\underset{l\rightarrow\infty}{\textup{lim}} 			\int\psi(\boldsymbol{x})f_r(\boldsymbol{x})\text{d}P(\boldsymbol{x})=\int\psi(\boldsymbol{x})\text{d}P(\boldsymbol{x},y),~\forall\psi(\boldsymbol{x})\in L_2,
%			\end{equation}
%		holds true for all functions $\psi(\boldsymbol{x})\in L_2$, where $P(\pmb{x})$ and $P(\pmb{x}, y)$ are probability distribution and joint distrubution function, $L_2$ is the  square-integrable space \cite{vapnik2020complete}.
		\end{definition}
		In practical scenarios where an infinite number of functions $\psi(\boldsymbol{x})$ is unattainable, it is considered to relax the conditions of the weak mode of convergence by utilizing only a finite number of $\psi(\boldsymbol{x})$ as
		\begin{equation}\label{predicate}
			\langle \psi(\boldsymbol{x}) ,\left\{ f_h(\boldsymbol{x})-f(\boldsymbol{x})\right\}\rangle =0.
		\end{equation}
		In \eqref{predicate}, a $\psi(\boldsymbol{x})$ can be regarded as the representation of some knowledge about the data and is called a predicate \cite{vapnik2020complete}. In this situation, for our concerned problem where $f(\boldsymbol{x}) = P(y = 1|\boldsymbol{x})$, \eqref{predicate} can be rewritten as 
		\begin{equation}
			\int\psi(\boldsymbol{x})P(y=1|\boldsymbol{x})\text{d}P(\boldsymbol{x})=\int\psi(\boldsymbol{x})\text{d}P(y=1,\boldsymbol{x}),
		\end{equation}
		which is called Statistical Invariant (SI) \cite{vapnik2020complete}. Naturally, in transfer learning, it is expected to incorporate knowledge from the source and target domains into the learning process in the form of predicates. Therefore, we consider the following statistical invariants
		\begin{equation}\label{predicateS}
			\int\psi_{\mathcal{S}^i_v}(\boldsymbol{x})f(\boldsymbol{x})\text{d}P(\boldsymbol{x})=\int\psi_{\mathcal{S}^i_v}(\boldsymbol{x})\text{d}P(y=1,\boldsymbol{x}),~i=1,...,N,~v=1,...,V,\\
		\end{equation}
		\begin{equation}\label{predicateT}
			\int\psi_{\mathcal{T}_u}(\boldsymbol{x})f(\boldsymbol{x})\text{d}P(\boldsymbol{x})=\int\psi_{\mathcal{T}_u}(\boldsymbol{x})\text{d}P(y=1,\boldsymbol{x}),~u=1,...,U,
		\end{equation}
		where $\psi_{\mathcal{S}^i_v}(\boldsymbol{x})$  represent the knowledge from the multiple source domains $D_{\mathcal{S}^i}$, while the ones in \eqref{predicateT} extract the knowledge from the target domain. The statistical invariants \eqref{predicateS} and \eqref{predicateT} are the realization of the weak mode of convergence, which restricts the admissible function set of $f(\boldsymbol{x})$ and accelerates convergence \cite{LTKDE}. Therefore, to combine with \eqref{VSVM}, \eqref{predicateS}, and \eqref{predicateT}, we have the following optimization problem for the multi-source transfer learning problem.
	\begin{equation}\label{STLUSI-ori}
		\begin{aligned}
			&\underset{f}{\text{min}} \int \ell\left(\int 		\boldsymbol{\theta}(\boldsymbol{x}-\boldsymbol{\hat{x}})f(\boldsymbol{\hat{x}})\text{d}P(\boldsymbol{\hat{x}}),\,\int\boldsymbol{\theta}(\boldsymbol{x}-\boldsymbol{\hat{x}})\text{d}P( y=1,\boldsymbol{\hat{x}})\right)\sigma(\boldsymbol{x})\text{d}\mu(\boldsymbol{x})\\
			&\text{s.t.}~								\int\psi_{\mathcal{S}^i_v}(\boldsymbol{x})f(\boldsymbol{x})\text{d}P(\boldsymbol{x})=\int\psi_{\mathcal{S}^i_v}(\boldsymbol{x})\text{d}P(y=1,\boldsymbol{x}), ~i=1,...,N,~~v=1,...,V,\\
			&~~~~~\int\psi_{\mathcal{T}_u}(\boldsymbol{x})f(\boldsymbol{x})\text{d}P(\boldsymbol{x})=\int\psi_{\mathcal{T}_u}(\boldsymbol{x})\text{d}P(y=1,\boldsymbol{x}),~~u=1,...,U.
		\end{aligned}
	\end{equation}
	Once the loss function $\ell (\cdot)$ is determined, the optimization problem \eqref{STLUSI-ori} can be solved using existing methods efficiently. The construction of the predicates can be found in Section \ref{Predicate}. \par

\subsection{Knowledge transfer by stochastic ensemble}
	Building on \eqref{STLUSI-ori}, we are able to bring a substantial amount of knowledge from source domains and the target domain. However, it is costly to leverage all knowledge at once, and it should also be noted that not all knowledge is beneficial to the learner \cite{zhang2022survey}. For those reasons, we propose Stochastic Ensemble Multi-Source Transfer Learning Using Statistical Invariant (SETrLUSI), which is an ensemble learning method by utilizing both strong and weak modes of convergence, and stochastic selection. We first construct the following optimization problem for a weak learner based on \eqref{STLUSI-ori}.
	\begin{equation}\label{SETrLUSI-one}
		\begin{aligned}
			&\underset{f}{\text{min}} \int \ell\left(\int 	\boldsymbol{\theta}(\boldsymbol{x}-\boldsymbol{\hat{x}})f(\boldsymbol{\hat{x}})\text{d}P(\boldsymbol{\hat{x}}),\,\int\boldsymbol{\theta}(\boldsymbol{x}-\boldsymbol{\hat{x}})\text{d}P(y=1,\boldsymbol{\hat{x}})\right)\sigma(\boldsymbol{x})\text{d}\mu(\boldsymbol{x})\\
			&\text{s.t.}~		\int\psi^*(\boldsymbol{x})f(\boldsymbol{x})\text{d}P(\boldsymbol{x})=\int\psi^*(\boldsymbol{x})\text{d}P(y=1,\boldsymbol{x}),\\
		\end{aligned}
	\end{equation}
	where $\psi^*(\boldsymbol{x})$ is the predicate with one type of knowledge from source domains or the target domain. We give three stochastic ways to facilitate the utilization of knowledge.

\begin{itemize}
	
	\item \textbf{Stochastic target domain}.
	In classical transfer learning models, limited $D_{\mathcal{T}^l}$ is directly used for training, but it often fails to provide sufficient target domain information. Therefore, we use the bootstrapping \cite{zhao2024model} to stochastic sampling from $D_{\mathcal{T}^l}$ to form $D_{\mathcal{T}^l}^{new}$ in each iteration, which decreases the influence on weak learners from the lack of label samples and also greatly reduces the time cost. \par 	
	
	\item \textbf{Stochastic source domain}.
	According to the statistical heterogeneity of the population, different parts of the same population with the  knowledge that always have different statistical properties. For each SI, we utilize the proportional sampling by ratio $\gamma$ on each source domain $D_{\mathcal{S}^i}$ for contributing to greater diversity in the construction of weak learners. \par
	
	\item \textbf{Stochastic statistical invariant}.
	To ensure diversity among the weak learners, we introduce the random selection of statistical invariants. That is, given a predicate set 
	$ \Psi=\{\psi_{\mathcal{T}_1},\,...,\,\psi_{\mathcal{T}_U},\,\psi_{\mathcal{S}_{U+1}},...,\,\psi_{\mathcal{S}_{(U+NV)}}\},
	$ any $\psi^*(\boldsymbol{x})$ can be chosen stochastically in each iteration to form the weak learner. As long as the given set of predicates includes meaningful knowledge, the increasing number of weak learners will ensure that available knowledge is utilized in constructing the ensemble learner. \par

\end{itemize}
%				Finally, we introduced an additional  \textbf{Stochastic Probability} sampling based on a uniform distribution on top of the bootstrapping process. This approach aims to further enhance the randomness of the training samples, aligning with the sample update mechanisms used in other related transfer learning ensemble methods. A detailed  description of the proposed extension is given in Algorithm \ref{alg:Al1}. \par 

By introducing statistical invariants and stochastic techniques, SETrLUSI effectively extracts various kinds of knowledge to enhance the stability of the learner. A detailed description of SETrLUSI is provided in Algorithm \ref{alg:Al1}.\par

%Additionally, converting source domain data into equality constraints rather than using them directly in the training process and eliminating the iteration of samples from both domains significantly also reduce the training time cost, which will  discussed in detail in the other section. 
%				\begin{small}
	%				\end{small}

%				\begin{figure}
	%					\centering
	%					\includegraphics[height=0.35\textheight]{TELUSI.pdf}
	%					\caption{A simple illustration of SETrLUSI. }
	%					\label{illustration}
	%				\end{figure}				

\subsection{Problem solving}
%	To solve the optimization problem \eqref{SETrLUSI-one}, we apply the squared loss in \eqref{frd}. Then, the objective function of \eqref{SETrLUSI-one} can be rewritten as
%	\begin{equation}\label{L2}
%		\underset{f}{\text{min}}~\int \left(\int 		\boldsymbol{\theta}(\boldsymbol{x}-\boldsymbol{\hat{x}})f(\boldsymbol{\hat{x}})\text{d}P(\boldsymbol{\hat{x}})-\int\boldsymbol{\theta}(\boldsymbol{x}-\boldsymbol{\hat{x}})\text{d}P(y=1,\boldsymbol{\hat{x}})\right)^2\sigma(\boldsymbol{x})\text{d}\mu(\boldsymbol{x}).
%	\end{equation}
	Since $P(\boldsymbol{x})$ and $P(y = 1, \boldsymbol{x})$ are unknown, we first approximate them by their empirical estimates $	\frac{1}{q}\sum_{k=1}^{q}\theta(\boldsymbol{x}-\boldsymbol{x}_k)$ and  $\frac{1}{q}\sum_{k=1}^{q}y_k\theta(\boldsymbol{x}-\boldsymbol{x}_k)$ \cite{vapnik2015v}, respectively. To control the model complexity, the regularization $\lVert f(\boldsymbol{x})\rVert^2$ is also considered. By applying the squared loss, the approximation of \eqref{SETrLUSI-one} is
	\begin{equation}\label{constraint}
		\begin{aligned}
			&\underset{f}{\text{min}}~\int \left(\sum_{k=1}^{q}  		\boldsymbol{\theta}(\boldsymbol{x}-\boldsymbol{x}_k)f(\boldsymbol{x}_k)-\sum_{k=1}^{q}y_k\boldsymbol{\theta}(\boldsymbol{x}-\boldsymbol{x}_k)\right)^2\sigma(\boldsymbol{x})\text{d}\mu(\boldsymbol{x})+\lambda\lVert f(\boldsymbol{x})\rVert^2\\
			&\mathrm{s.t.}~\frac{1}{q}\sum_{k=1}^q\psi^*(\boldsymbol{x}_k) f\left( \boldsymbol{x}_k \right) =\frac{1}{q}\sum_{k=1}^qy_k\psi^*(\boldsymbol{x}_k),
%			&\frac{1}{q}\sum_{k=1}^q{\psi_{\mathcal{T}_u}(\boldsymbol{x}_k) f\left( \boldsymbol{x}_k \right) =}\frac{1}{q}\sum_{k=1}^q{y_k\psi_{\mathcal{T}_u}(\boldsymbol{x}_k)},\ u=1,...,U,\\
%			&\frac{1}{q}\sum_{k=1}^q{\psi_{\mathcal{S}^i_v}(\boldsymbol{x}_k) 	f\left( \boldsymbol{x}_k \right) =}\frac{1}{q}\sum_{k=1}^q{y_k\psi_{\mathcal{S}^i_v}(\boldsymbol{x}_k)},~i=1,...,N,\ v=1,...,V.
		\end{aligned}
	\end{equation}
	where $\lambda>0$ is the trade-off parameter. We are looking for the solution in the Reproducing Kernel Hilbert Space associated with a kernel $\mathcal{K}(\boldsymbol{x})=(K(\boldsymbol{x}, \boldsymbol{x}_1),...,K(\boldsymbol{x}, \boldsymbol{x}_q))^\top$. Write  $\boldsymbol{F}(f)=(f(\boldsymbol{x}_1),...,f(\boldsymbol{x}_q))^\top=\boldsymbol{A}+b\boldsymbol{1}_q$, $\boldsymbol{A}=(a_1,...,a_d)^\top$, $\lVert f(\boldsymbol{x})\rVert^2=\boldsymbol{A}^{\top}\boldsymbol{KA}$, $\boldsymbol{K}=(\mathcal{K}(\boldsymbol{x}_1),...,\mathcal{K}(\boldsymbol{x}_q))^\top$, and $\boldsymbol{\psi}^*(\boldsymbol{x})=(\psi^*(\boldsymbol{x}_1),...,\psi^*(\boldsymbol{x}_q))^\top$. Consider the case where  $\sigma(\boldsymbol{x})=1$ and the distribution function $\mu(\boldsymbol{x})$ is the uniform distribution, that is $\mu(\boldsymbol{x})=\prod_{i=1}^d\frac{1}{q}\sum_{k=1}^{q}\theta(\boldsymbol{x}^i-\boldsymbol{x}_k^i)$ \cite{vapnik2015v}, then the optimization problem of \eqref{constraint} is rewritten as
	\begin{equation}\label{SETrLUSI}
		\begin{aligned} &\underset{\boldsymbol{A},\,b}{\min}~
			\left(\boldsymbol{F}(f)-\boldsymbol{Y}\right)^\top 	\boldsymbol{V} \left(\boldsymbol{F}(f)-\boldsymbol{Y}\right)+\lambda  \boldsymbol{A}^{\top}\boldsymbol{KA} \\
			&\mathrm{s.t.}~\boldsymbol{\psi}^*(\boldsymbol{x})^\top \boldsymbol{F}(f) =\boldsymbol{\psi}^*(\boldsymbol{x})^\top \boldsymbol{Y},
%			&\mathrm{s.t.}~ 					\frac{1}{q}\sum_{k=1}^q{\psi_{\mathcal{T}_u}(\boldsymbol{x}_k) f\left( \boldsymbol{x}_k \right) =}\frac{1}{q}\sum_{k=1}^q{y_k\psi_{\mathcal{T}_u}(\boldsymbol{x}_k)},\ u=1,...,U,\\
%			&~~~~~~\frac{1}{q}\sum_{k=1}^q{\psi_{\mathcal{S}^i_v}(\boldsymbol{x}_k) f\left( \boldsymbol{x}_k \right) =}\frac{1}{q}\sum_{k=1}^q{y_k\psi_{\mathcal{S}^i_v}(\boldsymbol{x}_k)},~i=1,...,N,\ v=1,...,V.
		\end{aligned}
	\end{equation}
	where $\boldsymbol{V}(\boldsymbol{x}_i,\,\boldsymbol{x}_j)=\int\boldsymbol{\theta}(\boldsymbol{x}-\boldsymbol{x}_i)\boldsymbol{\theta}(\boldsymbol{x}-\boldsymbol{x}_j)\text{d}(\boldsymbol{x})$. By writing $\boldsymbol{P}=\psi ^*(\boldsymbol{x})\psi^{*}(\boldsymbol{x})^\top$,  we further consider its unconstrained least squared formulation
	as follows:
	\begin{equation}\label{LUSI2}
		\begin{aligned}					\underset{\,\,\boldsymbol{A},b}{\min}~
			\left(\boldsymbol{F}(f)-\boldsymbol{Y}\right)^\top
			\left(\hat{\tau}\boldsymbol{ 	V}+\tau\boldsymbol{P}\right) \left(\boldsymbol{F}(f)-\boldsymbol{Y}\right)+\lambda  \boldsymbol{A}^{\top}\boldsymbol{KA},
		\end{aligned}
	\end{equation}
	where $\hat{\tau}+\tau=1$, $0<\hat{\tau}, \tau<1$ are hyperparameters that describe the relative importance of $\boldsymbol{V}$ and $\boldsymbol{P}$. Denote the objective function of \eqref{LUSI2} as $Q(\boldsymbol{A},b)$ and $\hat{\boldsymbol{P}}=\hat{\tau} \boldsymbol{V}+\tau \boldsymbol{P}$.
	From the necessary optimality conditions of minima of \eqref{LUSI2}, it has
	$\frac{\partial Q(\boldsymbol{A},b)}{\partial 	\boldsymbol{A}}=\boldsymbol{0},~\frac{\partial Q(\boldsymbol{A},b)}{\partial b}=0$,
%\begin{equation}
%		\frac{\partial Q(\boldsymbol{A},b)}{\partial 	\boldsymbol{A}}=\boldsymbol{K}\hat{\boldsymbol{P}}(\boldsymbol{KA}+b
%		\boldsymbol{1}_q-\boldsymbol{Y})+\lambda\boldsymbol{KA}=0,~\frac{\partial Q(\boldsymbol{A},b)}{\partial 	b}=\boldsymbol{1}^{\top}_q\hat{\boldsymbol{P}}(\boldsymbol{KA}+b\boldsymbol{1}_q-\boldsymbol{Y})=0.
%\end{equation}
	which follows
	\begin{equation}
		\boldsymbol{A}=(\hat{\boldsymbol{P}}\boldsymbol{K}+\lambda\boldsymbol{V})^{-1}
		\hat{\boldsymbol{P}}(\boldsymbol{Y}-b\boldsymbol{1}_q),
		~~b=\frac{\boldsymbol{1}_{q}^{\top}\boldsymbol{\hat{P}}
			(\boldsymbol{Y}-\boldsymbol{KA}_1)}
		{\boldsymbol{1}_{q}^{\top}\boldsymbol{\hat{P}}
			(\boldsymbol{1}_q-\boldsymbol{KA}_2)},
	\end{equation}
	where $\boldsymbol{A}_1=(\hat{\boldsymbol{P}}\boldsymbol{K}
	+\lambda\boldsymbol{V})^{-1}\hat{\boldsymbol{P}}\boldsymbol{Y},
	\boldsymbol{A}_2=(\hat{\boldsymbol{P}}\boldsymbol{K}
	+\lambda\boldsymbol{V})^{-1}\hat{\boldsymbol{P}}$. Then, we obtain the conditional probability function as $f(\boldsymbol{x})=P(y=1|\boldsymbol{x})=\boldsymbol{KA}+b$. For a given test sample $\boldsymbol{x}^*$ from the target domain, the conditional probability $f(\boldsymbol{x}^*)=P(y=1|\boldsymbol{x}^*)$ of $\boldsymbol{x}^*$ belonging to class 1 is obtained. If $P(y=1|\boldsymbol{x}^*)\ge0.5$, it is classified as class 1; otherwise, it is classified as class 0.

	\begin{algorithm}[h]
		\caption{Stochastic Ensemble Multi-Source Transfer Learning Using Statistical Invariant}\label{alg:Al1}
		\begin{algorithmic}[1]
			\STATE \textbf{Input} The maximum number of iterations $H$, source domain data $D_{\mathcal{S}^i}$, $i=1,...,N$, target domain training data $D_{\mathcal{T}^l}$, ratio $\gamma$, predicate set $\Psi=\{\psi_{\mathcal{T}_1},\,...,\,\psi_{\mathcal{T}_U},\,\psi_{\mathcal{S}_{U+1}},...,\,\psi_{\mathcal{S}_{(U+NV)}}\}$.\\
			%		where the first 
			%		$U$ come from target domain, and the remaining are from source domains.\\
			\STATE \textbf{Initialize}  An empty set of weak classifiers $\mathcal{F}^0=\emptyset$.\\
			\STATE \textbf{For} $h=1,..., H$ \textbf{do}\\
			\STATE \quad An empty set of candidate weak classifiers in the $h$-th round $\mathcal{F}_{sub}^h=\emptyset$. \\
			\STATE \quad \textbf{For} $i=1,..., N$ \textbf{do} \\
			\STATE \quad \quad \textbf{Stochastic target domain}: Obtain $D_{\mathcal{T}^l}^{h}$ from $D_{\mathcal{T}^l}$ by bootstrapping. \\
			\STATE \quad \quad \textbf{Stochastic source domain}: Obtain $D_{\mathcal{S}^i}^{h}$ from $D_{\mathcal{S}^i}$ with ratio $\gamma$.\\
			\STATE \quad \quad \textbf{Stochastic statistical invariant}: Select $\psi^*$ from  $\Psi$ and calculate based on  $D_{\mathcal{S}^i}^{h}$ or $D_{\mathcal{T}^l}^{h}$. \\
			%						\STATE \quad \quad \textbf{Stochastic Probability}: $0<p_{ik}^h\le1,\,k=1,...,q$,\\ ~~~~~~providing by a known distribution\\
			\STATE \quad \quad Learn $f^h$: $\boldsymbol{X}_{\mathcal{T}^l}\rightarrow \boldsymbol{Y}_{\mathcal{T}}$ with  $\psi^*$ by \eqref{STLUSI-ori} over the half of the set  $D_{\mathcal{T}^l}^{h}$. \\
			\STATE \quad \quad Compute the error $\epsilon_i^h$ of $f_i^h$
			$$
			\epsilon_i^h=\sum_{k=1}^{q}\frac{\{\textup{sign}\left(f^h_i\left( \boldsymbol{x}_{k}\right)-0.5\right)\neq y_{k}\}}{q}
			$$
			\STATE \quad \quad Set $\beta_i^h=1-\epsilon_i^h/(1-\epsilon_i^h)$.
			\STATE \quad \quad \textbf{If} $\epsilon_{i}^h\le0.5$ \textbf{then}
			$$\mathcal{F}_{sub}^h\rightarrow\mathcal{F}_{sub}^{h-1}\cup (f^h_{i},\,\beta^h_{i},\,\epsilon_{i}^h)$$\\
			\STATE \quad \quad \textbf{Else}
			$$\mathcal{F}_{sub}^h\rightarrow\mathcal{F}_{sub}^{h-1}$$\\
			\STATE \quad \quad \textbf{End}\\
			\quad \textbf{End}\\
			$\mathcal{F}^h\rightarrow\mathcal{F}^{h-1}\cup (f^h_{min},\,\beta^h_{min})$ from $\mathcal{F}_{sub}^h$ with the minima $\epsilon_{i}^h$.\\
			\textbf{End}
			\STATE $\hat{\beta}^{h}=\beta^h/\sum_{h=1}^{H}\beta^h$.
			\STATE \textbf{Output} the decision function
			$$
			f\left( \boldsymbol{x} \right) =\left\{ \begin{array}{l}
				1,~\sum_{h=1}^H{\hat{\beta}^hf^h\left( \boldsymbol{x} \right) \ge 0.5,}\\
				0,~\sum_{h=1}^H{\hat{\beta}^hf^h\left( \boldsymbol{x} \right) <0.5.}\\
			\end{array} \right. 
			$$
			
		\end{algorithmic}
	\end{algorithm}	
	\subsection{Theoretical analysis} 
	
	\subsubsection{Ensemble performance analysis}
	%				\subsubsection{Superiority analysis}
	%				First, we will present evidence demonstrating that the ensemble learner $f(\boldsymbol{x})$ outperforms individual base learners $f_h(\boldsymbol{x})$. \par
	%				
	%				\subsubsection{Convergence analysis}
	Denote $f^h(\boldsymbol{x})=P^h(y=1|\boldsymbol{x})$ as the conditional probability of an input $\boldsymbol{x}$ belonging to class 1 from the $h$-th weak learner, and weight of $f^h(\boldsymbol{x})$ as $\hat{\beta}^{h}=\beta^h/\sum_{h=1}^{H}\beta^h$, where $\beta^h=1-\epsilon_i^h/(1-\epsilon^h)\in [0,1]$ with $\epsilon^h\in[0,0.5)$ being the error of $f^h(\boldsymbol{x})$, $h=1,...,H$. According to our algorithm, the weighted ensemble learner is obtained as $\hat{f}(\boldsymbol{x}) =\sum_{h=1}^{H}\hat{\beta}^{h}f^h(\boldsymbol{x})$.   
	%				
	%				, the Bayes predictor gives the highest attainable correct classification rate
	%				\begin{equation}
		%					\begin{aligned}
			%						r^*=\int   \underset{j}{\textup{max}} P(j|\boldsymbol{x})\textup{d}P(\boldsymbol{x})
			%					\end{aligned}.
		%				\end{equation}
	Let $P^*(j|\boldsymbol{x})$ be the true conditional 
	probability that an input $\boldsymbol{x}$ is from class $j$. It can be proven that the performance of the ensemble leaner getting from SETrLUSI surpasses that of a weak leaner. 
	
	%				\begin{equation}
		%					\int\sum_{j=1}^{2} Q_h(j|\boldsymbol{x}) P(j|\boldsymbol{x})\textup{d}P(\boldsymbol{x})
		%				\end{equation}
	%				and 
	%				\begin{equation}
		%					\int_{\boldsymbol{x}\in C'}\sum_{j=1}^{2} \sum_{h=1}^{H}\hat{\beta}_{h}Q_h(j|\boldsymbol{x}) P(j|\boldsymbol{x})\textup{d}P(\boldsymbol{x}).
		%				\end{equation}
	%				Letting $C$ is the set of inputs $\boldsymbol{x}$ at which satisfy $\textup{argmax}_j Q(j|\boldsymbol{x})=\textup{argmax}_j P(j|\boldsymbol{x})$ and $C'$ is made by the remaining inputs $\boldsymbol{x}$, 
	%				we get for the correct 
	%				classification probability of SETrLUSI the expression
	%				\begin{equation}
		%					\begin{aligned}
			%						r^M=&\int_{\boldsymbol{x}\in C} \underset{j}{\textup{max}} P(j|\boldsymbol{x})\textup{d}P(\boldsymbol{x})+\\
			%						&\int_{\boldsymbol{x}\in C'}\sum_{j=1}^{2} \sum_{h=1}^{H}\hat{\beta}_{h}Q_h(j|\boldsymbol{x}) P(j|\boldsymbol{x})\textup{d}P(\boldsymbol{x}).
			%					\end{aligned}
		%				\end{equation}
	
	\begin{proposition}\label{pro1}
%		Assume a sample set $\{(\boldsymbol{x}_k,y_k)\}^m_{k=1}$, where $\boldsymbol{x}\in \boldsymbol{X}$ and $y\in\boldsymbol{Y}\in\{0,1\}$, drawn from a random
%		variable $D$ following the joint distribution $ P^*(\boldsymbol{X}, \boldsymbol{Y})$. 
	By implementing Algorithm \ref{alg:Al1}, without considering noise, for the ensemble learner $\hat{f}(\boldsymbol{x})=\sum_{h=1}^{H}\hat{\beta}^{h}P^h(j|\boldsymbol{x})=\sum_{h=1}^{H}\hat{\beta}^{h}f^h(\boldsymbol{x})$, it has
		\begin{equation}
			(P^*(y|\boldsymbol{x})-\hat{f}(\boldsymbol{x}))^2\le	\mathbb{E}_{D_{\mathcal{T}^l}}[P^*(y|\boldsymbol{x})-f^h(\boldsymbol{x})]^2.
		\end{equation}
	\end{proposition}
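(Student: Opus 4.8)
The plan is to exploit the single structural fact that the ensemble predictor is a convex combination of the weak learners. First I would record that, by the construction in Algorithm~\ref{alg:Al1}, the weights satisfy $\hat{\beta}^h = \beta^h/\sum_{h=1}^H \beta^h \ge 0$ and $\sum_{h=1}^H \hat{\beta}^h = 1$, so $\hat{f}(\boldsymbol{x}) = \sum_{h=1}^H \hat{\beta}^h f^h(\boldsymbol{x})$ lies in the convex hull of $\{f^h(\boldsymbol{x})\}_{h=1}^H$. Using $\sum_h \hat{\beta}^h = 1$, I would rewrite the ensemble residual as $P^*(y|\boldsymbol{x}) - \hat{f}(\boldsymbol{x}) = \sum_{h=1}^H \hat{\beta}^h\big(P^*(y|\boldsymbol{x}) - f^h(\boldsymbol{x})\big)$.

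Next I would apply Jensen's inequality to the convex map $t \mapsto t^2$ (equivalently, invoke the ambiguity decomposition $(P^* - \hat{f})^2 = \sum_h \hat{\beta}^h (P^* - f^h)^2 - \sum_h \hat{\beta}^h (f^h - \hat{f})^2$ and discard the non-negative ``ambiguity'' term), obtaining $(P^*(y|\boldsymbol{x}) - \hat{f}(\boldsymbol{x}))^2 \le \sum_{h=1}^H \hat{\beta}^h\big(P^*(y|\boldsymbol{x}) - f^h(\boldsymbol{x})\big)^2$. This already shows the ensemble squared error is no larger than the $\hat{\beta}$-weighted average of the individual squared errors.

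Finally I would connect that weighted average with $\mathbb{E}_{D_{\mathcal{T}^l}}[\,\cdot\,]$. Each weak learner $f^h$ is the output of one run of the same randomized procedure --- bootstrap resampling of $D_{\mathcal{T}^l}$, proportional sampling of the source domains, and random selection of a statistical invariant --- so the $f^h$ are identically distributed and $\mathbb{E}_{D_{\mathcal{T}^l}}[(P^*(y|\boldsymbol{x}) - f^h(\boldsymbol{x}))^2]$ is a common value $e(\boldsymbol{x})$ independent of $h$. Taking expectations over the training randomness in the previous display and using $\sum_h \hat{\beta}^h = 1$ gives $\mathbb{E}[(P^*(y|\boldsymbol{x}) - \hat{f}(\boldsymbol{x}))^2] \le \sum_h \hat{\beta}^h e(\boldsymbol{x}) = e(\boldsymbol{x})$, which is the stated bound --- either reading the left side of the proposition as this expected ensemble error, or, if one prefers, reading $\mathbb{E}_{D_{\mathcal{T}^l}}$ on the right as the weight-induced average over the realized learners, in which case the claim is exactly the Jensen step.

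The genuinely routine part is the Jensen/ambiguity step. The one point requiring care is the last one: as literally written the inequality carries no expectation on the left, so one must either insert it or agree that $\mathbb{E}_{D_{\mathcal{T}^l}}$ denotes the $\hat{\beta}$-average over learners; a fully rigorous version also needs $\hat{\beta}^h$ to be (at least approximately) uncorrelated with the individual errors, or else should retain the weighted form $\sum_h \hat{\beta}^h \mathbb{E}_{D_{\mathcal{T}^l}}[(P^*-f^h)^2]$ on the right. I expect this interpretive/bookkeeping issue, not any inequality, to be the main obstacle.
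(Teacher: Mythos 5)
Your proposal is correct and follows essentially the same route as the paper: your Jensen/ambiguity step is exactly the paper's ``drop the nonnegative variance'' step in the bias--variance expansion of $\mathbb{E}_{D_{\mathcal{T}^l}}[(P^*(y|\boldsymbol{x})-f^h(\boldsymbol{x}))]^2$, and both arguments then rest on the same interpretive move of identifying $\mathbb{E}_{D_{\mathcal{T}^l}}[\,\cdot\,]$ with the $\hat{\beta}$-weighted average over the realized learners (so that $\mathbb{E}_{D_{\mathcal{T}^l}}[f^h(\boldsymbol{x})]=\hat{f}(\boldsymbol{x})$), which the paper states as an explicit assumption and which you correctly flag as the crux. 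The bookkeeping concern you raise about the missing expectation on the left-hand side is real, but the paper resolves it in the same way you suggest, by treating $\mathbb{E}_{D_{\mathcal{T}^l}}$ as the weight-induced average rather than an independent draw of the training set.
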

	\begin{proof}
		The proof is given in the Appendix \ref{Some proofs}.
	\end{proof}

	\begin{remark}
		To ensure the ensemble learner $\hat{f}(\boldsymbol{x})$ remains a conditional probability between 0 and 1, the error $\epsilon^h$ of each individual weak learner should must not exceed 0.5, which prevents $\beta^h< 0$.
	\end{remark}
	\begin{remark}
		Proposition \ref{pro1} establishes that our SETrLUSI outperforms any individual learner in estimating the conditional probability. Moreover, as the variance $\textup{Var}(f^h(\boldsymbol{x}))$ of $f^h(\boldsymbol{x})$ increases, SETrLUSI tends to exhibit lower squared error compared to a single weak learner, demonstrating its robustness in handling variability.
	\end{remark}

	\subsubsection{Convergence analysis}
	Now we explore the convergence behavior of SETrLUSI. Denote the weighted error as $S_e=\sum_{h=1}^{H}\hat{\beta}^hf^h(\boldsymbol{x})$, 
	where $f^h(\boldsymbol{x})=P^h(y=1|\boldsymbol{x})$. Based on the decision function described in Algorithm  \ref{alg:Al1}, our goal is to minimize the difference between $S_e$ and the true label $y$, ideally ensuring this difference is less than 0.5. Therefore, we define the misclassification probability as $P(|S_e-y|\ge \frac{1}{2})$. Based on this, we can derive an upper bound on the probability that the classification error rate of SETrLUSI deviates from its expected value. Subsequently, the following result can be obtained.
	
	\begin{proposition}\label{pro2}
		Given $H$ independent weak learners and their corresponding decision values $f^h(\boldsymbol{x})$, $h=1,...,H$, then an upper bound on the misclassification probability of SETrLUSI deviates from its true value $y$ is obtained as follows:
		\begin{equation}\label{ineq}
			P(|S_e-y|\ge \frac{1}{2})\le 2\textup{exp}{\left(-\frac{1}{2\sum_{h=1}^{H}(\hat{\beta}^h)^2}\right)}.
		\end{equation}
		%where notation \textup{exp}$(k)$ denotes $e^k$.
	\end{proposition}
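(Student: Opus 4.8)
The plan is to recognize \eqref{ineq} as a direct application of Hoeffding's inequality to the weighted sum $S_e=\sum_{h=1}^{H}\hat{\beta}^h f^h(\boldsymbol{x})$. First I would set up the ingredients. Since each $f^h(\boldsymbol{x})=P^h(y=1\mid\boldsymbol{x})$ is a conditional probability, it takes values in $[0,1]$, so the $h$-th summand $X_h:=\hat{\beta}^h f^h(\boldsymbol{x})$ is a random variable --- the randomness coming from the bootstrap resample $D_{\mathcal{T}^l}^h$ and the stochastic source/predicate selections performed in round $h$ of Algorithm \ref{alg:Al1} --- supported on the interval $[0,\hat{\beta}^h]$, hence of range exactly $\hat{\beta}^h$. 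Because these per-round draws are performed afresh (and independently) across $h=1,\dots,H$, the $X_h$ are independent, which is precisely the hypothesis needed to invoke Hoeffding.

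Next I would pin down the centering. Working in the noiseless regime as in Proposition \ref{pro1}, the true label satisfies $y=P^*(y\mid\boldsymbol{x})\in\{0,1\}$, and each weak learner is, in expectation over its training resample, an unbiased estimate of this conditional probability, $\mathbb{E}_{D_{\mathcal{T}^l}}[f^h(\boldsymbol{x})]=P^*(y\mid\boldsymbol{x})=y$. Combining this with the fact that the normalized weights satisfy $\sum_{h=1}^{H}\hat{\beta}^h=1$ yields $\mathbb{E}[S_e]=\sum_{h=1}^{H}\hat{\beta}^h\,\mathbb{E}[f^h(\boldsymbol{x})]=y$. Consequently the event $\{|S_e-y|\ge\tfrac12\}$ is exactly the deviation event $\{|S_e-\mathbb{E}S_e|\ge\tfrac12\}$ that a two-sided concentration bound controls.

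Then I would apply Hoeffding's inequality in the form: for independent $X_h\in[a_h,b_h]$, $P(|\sum_h X_h-\mathbb{E}\sum_h X_h|\ge t)\le 2\exp\!\big(-2t^2/\sum_h (b_h-a_h)^2\big)$. Substituting $b_h-a_h=\hat{\beta}^h$ and $t=\tfrac12$, the exponent becomes $-\,2\cdot\tfrac14\big/\sum_{h=1}^{H}(\hat{\beta}^h)^2=-\,1\big/\big(2\sum_{h=1}^{H}(\hat{\beta}^h)^2\big)$, which is exactly the bound claimed in \eqref{ineq}.

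The main obstacle is not the concentration step itself but justifying the centering $\mathbb{E}[S_e]=y$: one must make precise in what sense ``without considering noise'' removes the bias of the weak learners, so that each $f^h$ concentrates around the true label $y$ rather than around some other (Bayes-suboptimal) target --- this is where the argument leans on the same modeling assumptions, and the empirical estimators underlying \eqref{constraint}, already used for Proposition \ref{pro1}. A secondary point worth a sentence is the independence of the weak learners across rounds, which relies on the bootstrap and stochastic selections of Algorithm \ref{alg:Al1} being drawn independently at each iteration; if only a martingale (rather than full independence) structure were available, Azuma's inequality would deliver the same bound with minor additional bookkeeping.
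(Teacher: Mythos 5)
Your proposal is correct and follows essentially the same route as the paper's proof: Hoeffding's inequality applied to the weighted sum of independent weak learners, the assumption that $\mathbb{E}[f^h(\boldsymbol{x})]=y$ so that $\mathbb{E}[S_e]=y$ via $\sum_h\hat{\beta}^h=1$, and the substitution $\delta=\tfrac12$. Your version is in fact slightly more careful than the paper's, since you make explicit the range $[0,\hat{\beta}^h]$ of each summand and flag the unbiasedness of the weak learners as the assumption that most needs justification.
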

	\begin{proof}
		The proof is given in the Appendix \ref{Some proofs}.
	\end{proof}
	%\begin{proof}
	%	Assuming the independence of the weak learners, we can apply Hoeffding inequality \cite{hoeffding1994probability} to derive the following bound
	%	\begin{equation}\label{ineq2}
		%		P(|S_e-\mathbb{E}[S_e]|\ge \delta)\le 2\textup{exp}{\left(-\frac{2\delta^2}{\sum_{h=1}^{H}(\hat{\beta}^h)^2}\right)}.
		%	\end{equation}
	%	We further assume that the expectation $\mathbb{E}[f^h(\boldsymbol{x})]$ accurately reflects the conditional probability of the true label $y$ in the ideal case, which means 
	%	\begin{equation}
		%		\mathbb{E}\left[\sum_{h=1}^{H}\hat{\beta}^hf^h(\boldsymbol{x})\right]=\sum_{h=1}^{H}\hat{\beta}^h\mathbb{E}\left[f^h(\boldsymbol{x})\right]=y\sum_{h=1}^{H}\hat{\beta}^h=y.
		%	\end{equation}
	%	By setting 
	%	$\delta=\frac{1}{2}$, we obtain \eqref{ineq}.
	%\end{proof}	
	\begin{remark}	
		 According to Cauchy-Schwarz inequality, is has $\sum_{h=1}^{H} (\hat{\beta}^h)^2\in\left[\frac{1}{H},1  \right)$. Thus, the right-hand of \eqref{ineq} converges to 0 as $H\rightarrow \infty$ if each weak learner doesn't have a such good predictive ability.
	\end{remark}
	\subsection{Predicate construction}\label{Predicate}
		To fully utilize knowledge from the source and target domains, the following predicates and their corresponding statistical invariants are constructed for the SETrLUSI, as shown in Table \ref{SITab}. \par
		\begin{table}[h]				
			\centering
			\fontsize{5}{10}\selectfont
			\caption{The statistical invariants from source domains and target domain}
			\label{SITab}
			\begin{threeparttable}
				\begin{tabular}{c|c|c}
					\toprule
					\textbf{Domain} & \textbf{Statistical Invariant} & \textbf{Predicate construction}\\
					\midrule
					\multirow{6}{*}{\makecell{Source\\domains}} &  $\sum_{k=1}^{q}f_{\mathcal{S}^i}(\boldsymbol{x}_k)f(\boldsymbol{x}_k)=\sum_{k=1}^{q}f_{\mathcal{S}^i}(\boldsymbol{x}_k)y_k$ & $f_{\mathcal{S}^i}(\boldsymbol{x})=\boldsymbol{w}_{\mathcal{S}^i}^{\top}\boldsymbol{x}+b$ \\
					& $\sum_{k=1}^{q}g_{\mathcal{S}^i}(\boldsymbol{x}_k)f(\boldsymbol{x}_k)=\sum_{k=1}^{q}g_{\mathcal{S}^i}g(\boldsymbol{x}_k)y_k$ & $g_{\mathcal{S}^i}(\boldsymbol{x})=\textup{ sign}(f_{\mathcal{S}^i}(\boldsymbol{x}))$ \\
					&  $\sum_{k=1}^{q}x_k^{s} 		f(\boldsymbol{x}_k)=\sum_{k=1}^{q}x^{s}_k y_k$ & $s=\underset{l}{\textup{max}}~ |w^l_{\mathcal{S}^i}|,l=1,...,d$\\
					&  $	\sum_{k=1}^{q}\boldsymbol{x}^{s_1}_k\boldsymbol{x}_k^{(s_2)\top} f(\boldsymbol{x}_k)=\sum_{k=1}^{q}\boldsymbol{x}^{s_1}_k\boldsymbol{x}_k^{(s_2)\top} y_k$ & $s_1,s_2=\underset{l_1,l_2}{\textup{max}}~ |\boldsymbol{w}^l_\mathcal{S}|,l_1,l_2=1,...,d$\\
					&  $\sum\limits_{k_1=1}^{q}{f(\boldsymbol{x}_{k_1}) 				\sum\limits_{k_2=1}^nK(\boldsymbol{x}_{k_2},\boldsymbol{x}_{k_2(\mathcal{S}^i)})}
					=							\sum\limits_{k_1=1}^{q}{y_{k_1}\sum\limits_{k_2=1}^nK(\boldsymbol{x}_{k_1},\boldsymbol{x}_{k_2(\mathcal{S}^i)})}$ & $K(\boldsymbol{x}_{k_1},\boldsymbol{x}_{k_2(\mathcal{S}^i)}) =\sum\limits_{k_2=1}^ne \textasciicircum\left(-\frac{\lVert \boldsymbol{x}_{k_1}-\boldsymbol{x}_{k_2(\mathcal{S}^i)}\rVert ^2}{2\sigma ^2}\right)$\\
					\midrule
					\multirow{5}{*}{\makecell{Target\\domain}}& $\sum_{k=1}^{q}\bar{\boldsymbol{x}}_kf(\boldsymbol{x}_k)=\sum_{k=1}^{q}\bar{\boldsymbol{x}}_ky_k$ & $\bar{\boldsymbol{x}}_k=(\sum_{i=1}^{d}\boldsymbol{x}_k^i)/d$\\
					& $\sum_{k=1}^{q}\boldsymbol{x}_k^s 		f(\boldsymbol{x}_k)=\sum_{i=1}^{q}\boldsymbol{x}_k^sy_k$ & $s=\textup{Random}(i),i=1,...,d$\\
					& $	\sum_{k=1}^{q}\bar{\boldsymbol{x}}_k\bar{\boldsymbol{x}}_k^\top 		f(\boldsymbol{x}_k)=\sum_{k=1}^{q}\bar{\boldsymbol{x}}_k\bar{\boldsymbol{x}}_k^\top y_k$ & $\bar{\boldsymbol{x}}_k=(\sum_{i=1}^{d}\boldsymbol{x}_k^i)/d$\\
					& $		\sum_{k=1}^{q}\boldsymbol{x}^{s_1}_k\boldsymbol{x}_k^{(s_2)\top} 	f(\boldsymbol{x}_k)=\sum_{k=1}^{q}\boldsymbol{x}^{s_1}_k\boldsymbol{x}_k^{(s_2)\top} y_k$ & $s_1,s_2=\textup{Random}(i),i=1,...,d$\\
					& $\sum_{k=1}^{q}f(\boldsymbol{x}_k)=\sum_{k=1}^{q}y_k$ & $\psi(\boldsymbol{x}_k)=1$\\
					\bottomrule 
				\end{tabular}
			\end{threeparttable}
		\end{table}
		All the above predicates totally could generate $(n_{fs}+n_{gs}+d+d(d+1)/2+n_{kernel}+1+d+1+d(d+1)/2+1)$ different statistical invariants for a $D_{\mathcal{S}^i}$ and $D_{\mathcal{T}}$, $i=1,...,N$, where $n_{fs}$, $n_{gs}$, and $n_{kernel}$ are the number of parameters in $\psi(\boldsymbol{x})=f_S(\boldsymbol{x})$, $\psi(\boldsymbol{x})=g_S(\boldsymbol{x})$, and $\psi(\boldsymbol{x})=\sum_{k_2=1}^nK(\boldsymbol{x},\boldsymbol{x}_{k_2(\mathcal{S}^i)})$, respectively. This allows SETrLUSI to extract diverse types of knowledge from both the source and target domains, thereby increasing the diversity of weak learners and ultimately enhancing the stability and performance of the ensemble learner. We give a detailed description in the Appendix \ref{The detail description of predicates}. 

	\subsection{Time complexity}\label{time}
%		According to Algorithm \ref{alg:Al1}, the computational complexity of SETrLUSI is analyzed as follows: 1) The time complexity of line 6 is $\mathcal{O}(q)$. 2) The time complexity of line 7 is $\mathcal{O}(\gamma n_{S_i})$. 3) The time complexity of line 8 is $\mathcal{O}(1+\psi)$, which is the time complexity of calculating $\psi$. 4) The time complexity $\mathcal{O}(dp^2)$ in line 10 for a linear programming problem.
%		5) The time complexity of line 11 to line 13 is $\mathcal{O}(q+1+1)$ and line 18 is $\mathcal{O}(NlogN+1)$ for finding the minima weak learner.\par
		By converting source domain knowledge into SI and the sampling strategy in the source domains and target domain, SETrLUSI further reduces the training time cost. According to Algorithm \ref{alg:Al1}, the time complexity of SETrLUSI is $\mathcal{O}(HN(dq^2+T_\psi))$, where $T_\psi\le dq^2$ is the time complexity for calculating one selected predicate.
%		 In most cases, it has $d\ll q$. 
%		Therefore, the maximum time complexity for SETrLUSI is $\mathcal{O}(T_\psi)=\mathcal{O}(dq^2)$. When selecting from the predicate set with equal probability, the time complexity of SETrLUSI is $\mathcal{O}(HN(dq^2))$ in $H$ iterations. 
%		Specifically, when $d\gg q$, calculating predicates related to features will introduce a tolerable increase in time cost. 
		 Table \ref{timecompare} shows the comparison between SETrLUSI and other related methods. A detailed analysis of time complexity is provided in the Appendix \ref{Comparison of the Time Complexity of Related Transfer Learning Methods}.\par
		\begin{table}[H]
			\centering
			\fontsize{7}{11}\selectfont
			\caption{
				Time complexity of SETrLUSI and its related transfer learning methods}
			\label{timecompare}
			\begin{threeparttable}
				\begin{tabular}{c|cc}
					\toprule
					\textbf{Method} & \textbf{Weak classifier}  & \textbf{Time complexity} \\
					\hline
					Tradaboost \cite{dai2007boosting} & SVM & $\mathcal{O}(Hd(q+\sum_{i=1}^{N}n_{\mathcal{S}^i})^2)$ \\
					MSDTrAdaboost \cite{zhang2014instance} & SVM & $\mathcal{O}(Hd\sum_{i=1}^{N}(q+n_{\mathcal{S}^i}^2))$ \\
					MSTrAdaboost \cite{yao2010boosting}  & SVM & $\mathcal{O}(Hd\sum_{i=1}^{N}(q+n_{\mathcal{S}^i}^2))$ \\
					MHTLTrAdaboost \cite{wang2016hierarchical}  & SVM & $\mathcal{O}(Hd(q+n')^2)$ \\
					WMSTrAdaboost \cite{antunes2019weighted}  & SVM & $\mathcal{O}(H\sum_{i=1}^{N}(q+dn_{\mathcal{S}^i}^2))$ \\
					METL \cite{yang2020multi} & Softmax, SVM, DNN & $\mathcal{O}(H(d\sum_{i=1}^{N}(q+n_{\mathcal{S}^i})^3+n_{b}\sum_{l=1}^{L}n_ln_{l+1}))$ \\
					3SW\_MSTL \cite{bai2022three} & Logistic Regression & $\mathcal{O}(d\sum_{i=1}^{N^*}n_i)$ \\
					SETrLUSI & LUSI & $\mathcal{O}(HN(dq^2+T_\psi))$, $T_\psi\le dq^2$ \\
					\bottomrule
				\end{tabular}
			\end{threeparttable}
			%				\vspace{0.5em} % 添加空白以分隔表格和注释
			\parbox{0.8\textwidth}{\scriptsize $n'$: number of samples after clustering all $D_\mathcal{S}$ and $D_\mathcal{T}$\\
				$n_b$: the batch size in DNN\\
				$n_l$: the number of neurons in $l$-th layer\\
				$N^*$: the number of source domains chosen by Bellwether method \cite{krishna2018bellwethers}}
		\end{table}

\section{Experiments}

\subsection{Experimental setting}\label{Experiment setting}
In this experiment, six related methods are used for comparison: MDT \cite{zhang2014instance}, MST \cite{yao2010boosting}, MHT \cite{wang2016hierarchical}, WMT \cite{antunes2019weighted}, METL \cite{yang2020multi}, 3SW \cite{bai2022three}, and TrA \cite{dai2007boosting}. For better validating the effectiveness of SETrLUSI, we employed three types of data—\textbf{UCI}, \textbf{20 News}, and \textbf{VLSC Object Image Classification}. For each transfer learning task, 10\% of the labeled target domain samples are used as training data, while the remaining samples are used for prediction. All the methods are carried out on a desktop with Windows 10 OS, 64-bit, 32 GB of RAM, Intel i7-7700K CPU 4.20GHz, by Matlab 2021b. More details about the experimental setting and datasets can be found in the Appendices \ref{More details about experimental setting} and \ref{Data Description}.  \par

\subsection{Experimental results}	
\subsubsection{Performance analysis}\label{Performance analyze}
Table \ref{AC} presents the average accuracy (AC), standard deviation (Std), and running time (time) recorded in seconds (s) over ten experimental trials, where the underlined entries indicate the second-best results, while the bold entries represent the optimal results. As shown in Table \ref{AC}, SETrLUSI consistently ranks among the top methods in terms of AC with a lower Std, outperforming the other six methods. This demonstrates that the incorporation of the stochastic selection of predicates enables the learner to effectively integrate diverse knowledge from multiple source domains and the target domain. Meanwhile, it achieves a 2.85\% higher accuracy than 3SW, the second-fastest method, while reducing runtime by 12 times compared to WMT, the second-most accurate approach, demonstrating its efficiency and effectiveness.\par

Based on the results in Table \ref{AC}, we perform the subsequent  Friedman test \cite{demvsar2006statistical} and post-hoc Nemenyi test \cite{nemenyi1963distribution}. Let \( n_d \) and  \( n_{me} \)  represent the number of datasets and the number of methods, respectively, and let \( r_j \) be the average rank of the \( j \)-th method across all datasets, where \( j = 1, \dots, n_{me} \). In our case, \( n_d = 18 \), \( n_{me} = 8 \), and the critical value of \( F(7, 141) \) at significance level \( \alpha \) is 1.760. Following \cite{demvsar2006statistical}, it obtains \( \omega_F = 18.0276 > 1.7600 \), which rejects the null hypothesis that all methods perform equally. After conducting the Friedman test, the Nemenyi test in Table \ref{Rank} further demonstrates that SETrLUSI outperforms most of the methods.\par
\setcounter{table}{3}
\begin{table}[h]				
	\centering
	\fontsize{8}{7}\selectfont
	\caption{The Nemenyi test result}
	\label{Rank}
	\begin{threeparttable}
		\begin{tabular}{p{0.03cm}c|cccccccc}
			\toprule
			\multicolumn{2}{c|}{Data} &TrA & MSD & MST & MHTL & WMT & METL & 3SW & SETrLUSI \\
			\midrule
			\multicolumn{2}{c|}{Average rank} & 7.11  & 2.89  & 4.37  & 5.11  & 2.74  & 5.63 & 6.68  & \pmb{1.00}\\
			\multicolumn{2}{c|}{Rank difference} & 6.11 & 	1.89 &	3.37 &	4.11 &	1.74 &	4.63 &	5.68 &-\\
			\multicolumn{2}{c|}{CD value \cite{nemenyi1963distribution}} & 2.26 & 2.26& 2.26 & 2.26 & 2.26 & 2.26 & 2.26  &-\\
			\bottomrule 
		\end{tabular}
	\end{threeparttable}
\end{table}

%	\begin{figure}[H]
	%		\centering
	%		\includegraphics[width=0.6\linewidth]{NF}
	%		\caption{Nemenyi test chart}
	%		\label{NF}
	%	\end{figure}

\subsubsection{Influence of the number of labeled target domain samples}
In this subsection, we validate the stability of the proposed model under the variation of the number of labeled target domain samples on the Rice, Wave, and TWO Norm three UCI datasets. Specifically, we extract 70\%, 50\%, 30\%, and 10\% of the target domain data as the training data, with the remaining portion serving as the test data. Figure \ref{DFS} shows that while other methods exhibit significant fluctuations with varying training data proportions, SETrLUSI demonstrates greater stability, particularly as the proportion of the training data decreases and enhances the efficiency of training data utilization.

\begin{figure}[h]
	\centering	
	\begin{minipage}{0.32\linewidth}
		\centering
		\includegraphics[width=1\linewidth]{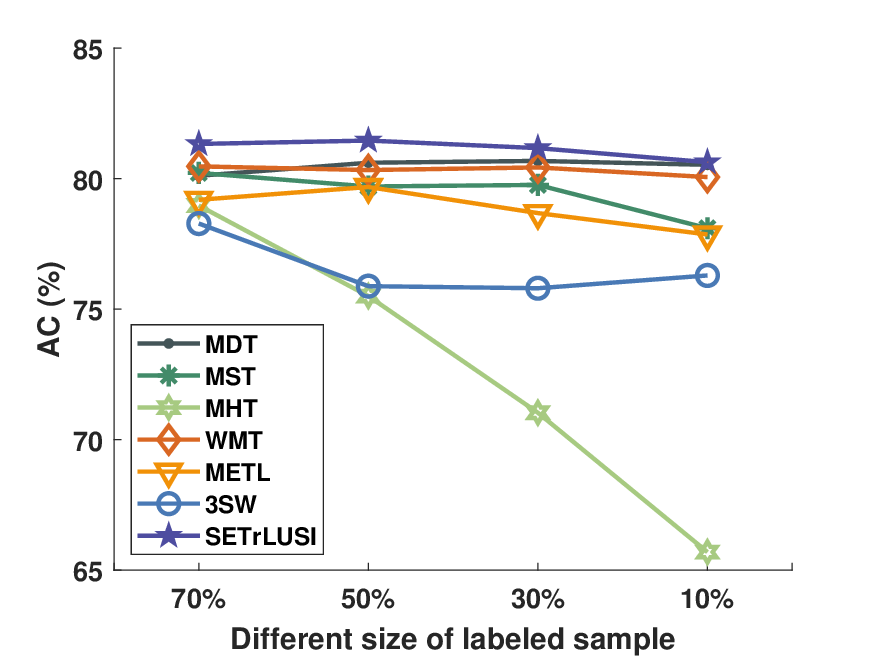}
		\caption*{(a) Rice 1\&2\&3}
	\end{minipage}
	\begin{minipage}{0.32\linewidth}
		\centering
		\includegraphics[width=1\linewidth]{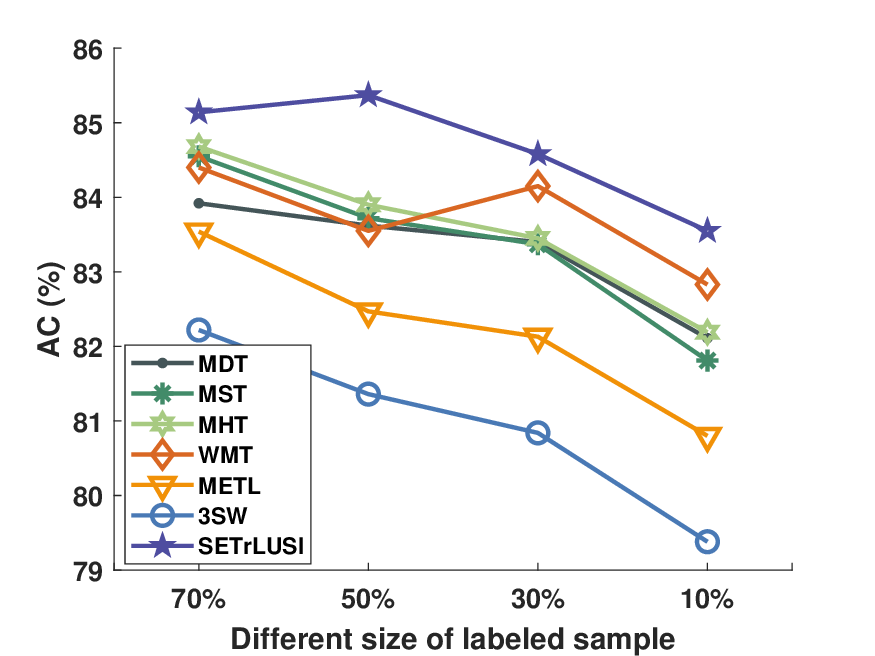}
		\caption*{(b) Wave 1\&7\&15}
	\end{minipage}
	\begin{minipage}{0.32\linewidth}
		\centering
		\includegraphics[width=1\linewidth]{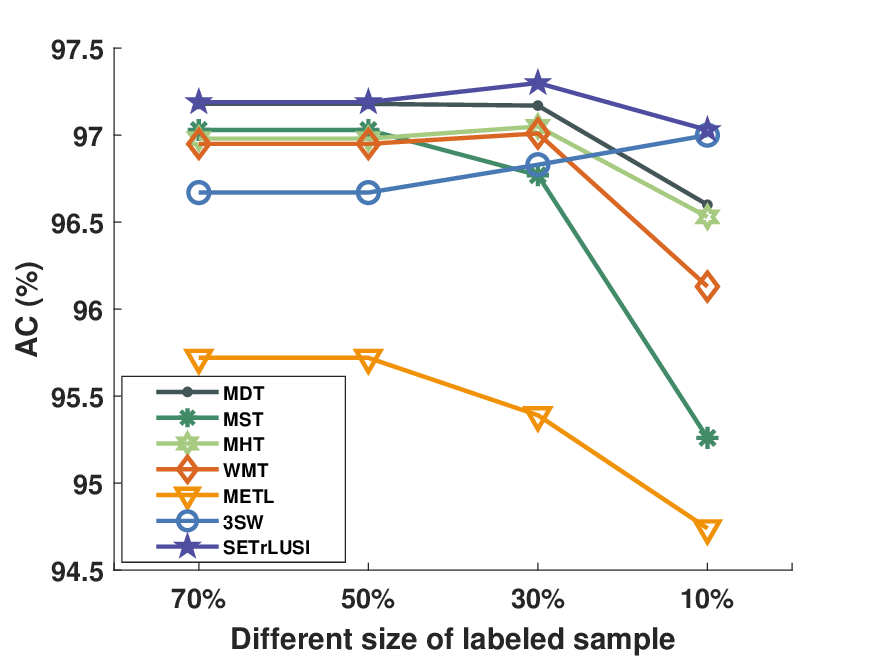}
		\caption*{(c) Two Norm 10\&13\&20}
	\end{minipage}
	\caption{The comparison on different size of the target domain training data on three UCI datasets, where the "1\&2\&3" means we use the first, second and third feature to cluster the three domains.}
	\label{DFS}
\end{figure}

\setcounter{table}{2}
\begin{table}[h]
	\centering
	\fontsize{6}{6.5}\selectfont
	\caption{The performance on all methods across 18 transfer learning tasks}
	\label{AC}
	\begin{threeparttable}
			\begin{tabular}{p{0.15cm}c|cccccccc}
					\toprule
					\multicolumn{2}{c|}{\multirow{3}{*}{Data}} &TrA & MDT & MST & MHT & WMT & METL & 3SW & SETrLUSI \\
					&& AC$\pm$Std &	AC$\pm$Std & AC$\pm$Std & AC$\pm$Std &	AC$\pm$Std & AC$\pm$Std & AC$\pm$Std &	AC$\pm$Std \\
					&& time(s) &	time(s) & time(s) &time(s) &time(s) &	time(s) &time(s) &time(s) \\
					\midrule
					\multirow{12}{*}{\rotatebox{90}{UCI}} & \multirow{2}{*}{Rice 1\&2\&3} &63.42$\pm$2.01 & \underline{80.53$\pm$0.57} & 78.10$\pm$1.73 & 65.69$\pm$8.76 & 80.06$\pm$0.43  &77.86$\pm$1.42 & 76.29$\pm$1.48 & \pmb{81.42$\pm$0.71} \\
					& & 20.72 &	7.86& 	8.35 &	\underline{4.17} &	18.30 &	467.14& 	5.50 &	\pmb{1.07}  \\
					&\multirow{2}{*}{Rice 3\&4\&7} &85.39$\pm$2.92 & 92.22$\pm$0.39 & 91.57$\pm$0.58 & 90.16$\pm$1.45 & 91.95$\pm$0.53  &\underline{92.28$\pm$0.54} & 90.46$\pm$1.11 & \pmb{92.77$\pm$0.39} \\
					&&16.84& 5.12 &	7.59 &	\underline{2.58} &	13.78& 	457.85 &	5.81 &	\pmb{1.31} \\
					%			&\multirow{2}{*}{rice134} &75.03$\pm$3.29 & \underline{83.51$\pm$0.78} & 81.94$\pm$0.85 & 77.57$\pm$0.94 & 82.66$\pm$0.51  &78.95$\pm$1.92 & 75.82$\pm$1.50 & \pmb{83.96$\pm$0.95} \\
					%			&&28.05 &	26.09& 	24.29 &	13.64& 	15.83 &	1758.56 &	5.07 &	1.08 \\
					&\multirow{2}{*}{Wave 1\&7\&15} &79.96$\pm$0.99 & 82.11$\pm$0.88 & 81.81$\pm$0.65 & 82.19$\pm$1.05 & \underline{82.83$\pm$0.89}  &80.80$\pm$0.79 & 79.38$\pm$0.57 & \pmb{84.13$\pm$0.82} \\
					&&223.51 &	23.15 &	77.50 &	11.46 &	89.38& 	1306.16 	&\underline{10.90} &	\pmb{1.89}\\ 
					&\multirow{2}{*}{Wave 1\&2\&4} &72.29$\pm$2.24 & 80.36$\pm$0.80 & 79.89$\pm$1.41 & 80.07$\pm$1.18 & \underline{82.70$\pm$1.50}  &78.99$\pm$0.92 & 78.10$\pm$0.60 & \pmb{83.20$\pm$0.43} \\
					&&222.92 &	26.41 &	69.86 &	\underline{9.02} &	80.84 &	739.82 & 10.70 &	\pmb{4.29} \\
					%			&\multirow{2}{*}{wave1719} &92.70$\pm$2.41 & 93.62$\pm$0.58 & 94.30$\pm$0.87 & 93.67$\pm$1.22 & 95.52$\pm$1.04  &93.01$\pm$0.95 & \pmb{96.52$\pm$0.23} & \underline{96.06$\pm$0.27} \\
					%			&&63.60 &	33.34 &	36.83 &	19.50 &	25.31 &	2788.91 &	9.80 &	4.90 \\
					&\multirow{2}{*}{TW 1\&3\&16} & 94.91$\pm$1.53 & 97.33$\pm$0.26 & 97.21$\pm$0.41 & 96.62$\pm$0.39 & 96.78$\pm$0.30 & \underline{97.34$\pm$0.28} & 96.13$\pm$0.19 & \pmb{97.52$\pm$0.27}\\
					&&59.05 &	\underline{26.04} &	31.35 &	60.34 &	47.37 &	1044.41 &	37.65 &	\pmb{3.85} \\					
					%			&\multirow{2}{*}{TW11520} & 95.74$\pm$0.74 & 96.87$\pm$0.32 & 95.76$\pm$0.98 & 96.49$\pm$0.44 & 96.25$\pm$0.62   &91.64$\pm$0.84 & \pmb{97.40$\pm$0.10} & \underline{97.08$\pm$0.26} \\
					%			&&91.08 &	32.94 &	37.72& 	13.1&	54.29 &	1233.13 &	40.75 &	4.14 \\
					&\multirow{2}{*}{TW 10\&13\&20} & 95.57$\pm$0.55 & 96.60$\pm$0.37 & 95.26$\pm$0.34 & 96.53$\pm$0.47 & 96.13$\pm$0.58 & 94.74$\pm$0.54 & \underline{97.00$\pm$0.25} & \pmb{97.24$\pm$0.24} \\
					&&79.34 &	30.82 &	29.68 &	\underline{9.81} &	44.92 &	1304.10 &	38.95 &	\pmb{4.09} \\
					\midrule
					%			\multicolumn{2}{c|}{\multirow{2}{*}{Average}}& 83.89$\pm$1.85 &	89.24$\pm$0.55 &	88.43$\pm$0.87 & 86.55$\pm$1.77	& \underline{89.43$\pm$0.71} &	87.29$\pm$0.91 &	87.46$\pm$0.67 &	\pmb{90.38$\pm$0.49} \\
					%			&&89.46 &	23.53 	&35.91 & \underline{15.96} &	43.33 &	1233.34 &	18.35 &	\pmb{2.96} \\
					%			\midrule
					\multirow{16}{*}{\rotatebox{90}{20News}}&\multirow{2}{*}{CS2R} &92.63$\pm$0.83 & 94.09$\pm$1.10 & 93.57$\pm$0.75 & 93.19$\pm$0.85 & \underline{95.86$\pm$0.39} & 93.15$\pm$0.41 & 91.59$\pm$0.51 & \pmb{95.57$\pm$0.44} \\
					&&370.06 &	439.37 &	217.47 &	43.86 &	380.57 &	1231.90 &	\underline{30.79} &	\pmb{12.72} \\
					&\multirow{2}{*}{CS2R} &89.17$\pm$1.12 & 90.60$\pm$1.36 & 89.97$\pm$1.95 & 89.62$\pm$1.54 & \underline{91.16$\pm$1.64} & 87.74$\pm$0.86 & 87.67$\pm$0.76 & \pmb{93.19$\pm$0.98} \\
					&&358.34 &	455.38 	&226.81 &	63.30 &	380.37& 	1891.21& 	\underline{37.58} &	\pmb{13.01} \\
					&\multirow{2}{*}{RS2C} &95.52$\pm$0.77 & 96.27$\pm$0.96 & 96.29$\pm$0.49 & 96.42$\pm$0.58 & \underline{97.24$\pm$0.40} & 95.74$\pm$0.36 & 95.20$\pm$0.36 & \pmb{98.16$\pm$0.34} \\
					&&350.43 &	429.50 &	169.56 &	41.68 &	347.95 	&1225.76 &	\underline{35.71} &	\pmb{13.09} \\
					&\multirow{2}{*}{CS2T} &94.71$\pm$0.80 & 95.17$\pm$0.93 & 95.23$\pm$0.67 & 95.15$\pm$0.43 & \underline{95.65$\pm$0.40} & 94.82$\pm$0.52 & 94.63$\pm$0.49 & \pmb{96.89$\pm$0.38} \\
					&&357.93 &	292.2&	104.77 &	\underline{34.42} &	395.00 	&1883.59 &	34.51 &	\pmb{12.50} \\
					&\multirow{2}{*}{CST2R} &92.26$\pm$0.88 & 94.24$\pm$0.65 & 93.68$\pm$0.88 & 93.27$\pm$0.91 & \underline{93.97$\pm$0.55} & 93.59$\pm$0.76 & 93.33$\pm$0.53 & \pmb{95.87$\pm$0.37} \\
					&&607.72 &	617.00 &	351.75 &	\underline{48.56} &	570.98& 	4714.33& 	80.40 &	\pmb{18.11}\\ 
					&\multirow{2}{*}{CRT2S} &88.99$\pm$1.18 & 90.81$\pm$1.57 & 90.21$\pm$1.50 & 89.57$\pm$1.48 & \underline{90.86$\pm$1.21}  & 88.04$\pm$0.89 & 88.24$\pm$0.71 & \pmb{93.16$\pm$1.02} \\
					&&670.19 &	788.76& 	395.69 &	72.36 &	608.18 &	2372.99 &	\underline{65.51} &	\pmb{18.37} \\
					&\multirow{2}{*}{RST2C} &95.22$\pm$1.00 & 96.50$\pm$0.42 & 96.19$\pm$0.51 & 96.38$\pm$0.69 & \underline{96.76$\pm$0.56} & 96.06$\pm$0.28 & 95.71$\pm$0.44 & \pmb{98.15$\pm$0.35} \\
					&&548.42 &	471.28& 	219.72& 	\underline{37.63}& 	129.57 	&2233.64 &	94.51 &	\pmb{16.21} \\
					&\multirow{2}{*}{CSR2T} &94.45$\pm$1.02 & 95.35$\pm$0.88 & 95.35$\pm$0.58 & 95.46$\pm$0.50 & \underline{96.13$\pm$0.45} & 95.01$\pm$0.59 & 94.98$\pm$0.52 & \pmb{96.90$\pm$0.38} \\
					&&740.35 &	599.32& 	292.48 &	\underline{57.10} &	554.09 &	4724.18 &	65.71 &	\pmb{15.91} \\
					\midrule						
					%			\multicolumn{2}{c|}{\multirow{2}{*}{Average}} & 92.87$\pm$0.95 &	94.13$\pm$0.98&	93.81$\pm$0.92&	93.63$\pm$0.87& \underline{94.43$\pm$0.74}&	93.02$\pm$0.58&	92.67$\pm$0.54&	\pmb{96.02$\pm$0.55}\\
					%			&&500.43 &	511.61 &	247.28 	& \underline{49.86 }&	420.84 	&2534.70 &	55.59 &	\pmb{14.99}  \\
					%			\midrule
					\multirow{8}{*}{\rotatebox{90}{VLSC}}&\multirow{2}{*}{SV2C} &97.83$\pm$0.49 & 98.61$\pm$0.81 & 97.32$\pm$0.57 & 97.13$\pm$0.48 & \pmb{100.00$\pm$0.00} & 96.67$\pm$1.15 & 97.97$\pm$0.32 & \pmb{100.00$\pm$0.00} \\
					&&3208.06 &	1700.10 &	2061.36 &	2929.66 &	235.95 &	5014.55 &	\pmb{18.11} &	\underline{125.87} \\
					%			&\multirow{2}{*}{LV2S}  &92.75$\pm$0.62 & \underline{93.16$\pm$0.26} & 92.65$\pm$0.48 & 92.96$\pm$0.67 & 92.64$\pm$0.47 & 90.98$\pm$0.45 &  93.25$\pm$0.24 & \pmb{93.44$\pm$0.33} \\
					%			&&2115.78 &	2768.97 &	748.35 &	2203.69 &	699.74 &	2634.19 &	8.08 &	104.23 \\
					&\multirow{2}{*}{SC2V} &92.97$\pm$0.92 & \underline{94.18$\pm$0.31} & 93.89$\pm$0.62 & 93.87$\pm$0.53 & 93.77$\pm$0.52 & 93.93$\pm$0.53 &  93.28$\pm$0.69 & \pmb{94.69$\pm$0.64} \\
					&&1712.18 &	2493.83 &	632.38 &	916.38 &	673.39 &	2386.01 &	\pmb{12.33} &	\underline{98.51} \\
					&\multirow{2}{*}{LSV2C} &99.47$\pm$0.39 & \pmb{100.00$\pm$0.00} & \pmb{100.00$\pm$0.00} & 99.23$\pm$0.57 & \pmb{100.00$\pm$0.00}   &95.82$\pm$1.33 & 99.23$\pm$0.22 & \pmb{100.00$\pm$0.00}\\
					&&24623.36 &	10197.09 	&5947.29 &	16494.90 &	2285.62 &	16798.23 	&318.58 &	\pmb{100.50} \\
					%			&\multirow{2}{*}{LCV2S} & 93.09$\pm$0.59 & \pmb{93.76$\pm$0.20} & 92.79$\pm$0.39 & 92.78$\pm$0.75 & 92.75$\pm$0.61 & 93.41$\pm$0.31 & 93.43$\pm$0.23 & \underline{93.54$\pm$0.27} \\
					%			&&12160.34 &	21235.69 &	5820.88 &	12088.90 &	7659.36 &	20347.12&	18.21 	&114.13 \\
					&\multirow{2}{*}{SCL2V} &93.73$\pm$0.67 & \underline{94.23$\pm$0.68} & 93.90$\pm$0.61 & 93.94$\pm$0.57 & 94.00$\pm$0.66  & 93.96$\pm$0.29 & 93.41$\pm$0.43 & \pmb{94.45$\pm$1.06} \\
					&&8675.97 &	8898.05 &	5366.40 &	4071.44 &	2404.45 &	20428.34 &	\pmb{110.86} &	\underline{139.41} \\
					\midrule
					\multicolumn{2}{c|}{\multirow{2}{*}{Average}} & 89.92$\pm$1.13&	92.73$\pm$0.72&	92.19$\pm$0.79 & 91.36$\pm$1.25&	\underline{92.98$\pm$0.63}&	91.47$\pm$0.69 & 91.26$\pm$0.57 & \pmb{94.09$\pm$0.53 }\\
					&&2380.30 &	1527.85 	&900.56 	&1383.81 &	514.48 &	3901.35 &	\underline{56.34} & \pmb{33.37}\\
					\bottomrule
				\end{tabular}
		\end{threeparttable}
%	\parbox{1\textwidth}{`s' is the second}
	%						\parbox{0.9\textwidth}{\scriptsize Note: bold indicates the second-best AC, while bold and underlined together denote the highest AC.}
\end{table}

\subsubsection{Convergence analysis}
To validate the convergence behavior of SETrLUSI, Figure \ref{20Con} clearly shows that SETrLUSI outperforms the other methods in terms of test error. In Figure \ref{20Con}, SETrLUSI exhibits a lower initial convergence point, indicating that the predicates effectively integrate knowledge from both the source and target domains into the learner's training process. 
%Additionally, it is evident that directly incorporating all predicates does not lead to improved generalization, confirming which confirms the effectiveness of the stochastic strategy.

\begin{figure}[h]
	\centering
	\begin{minipage}{0.245\linewidth}
		\centering
		\includegraphics[width=1\linewidth]{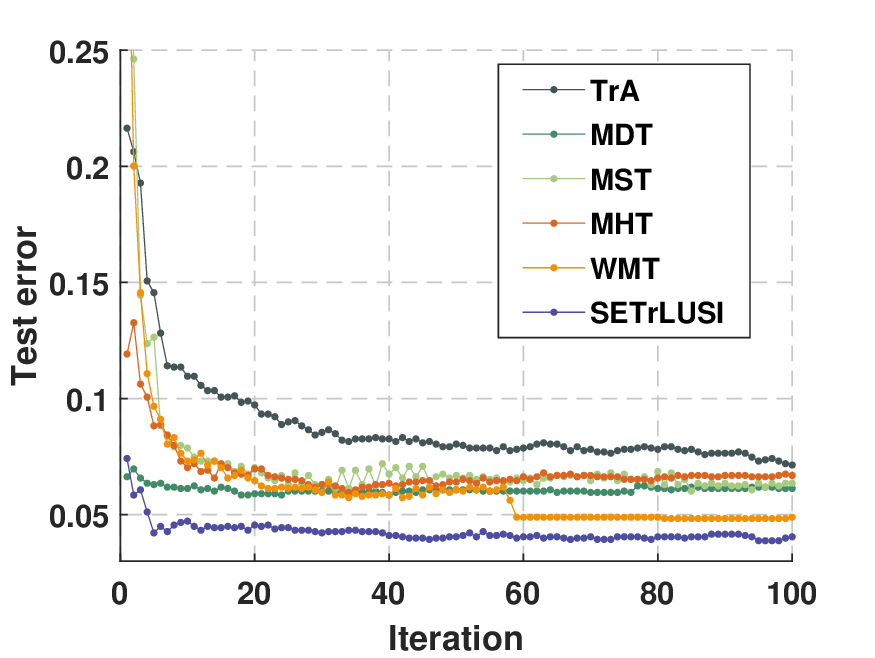}
		\caption*{(a) CS2R}
	\end{minipage}
	\begin{minipage}{0.245\linewidth}
		\centering
		\includegraphics[width=1\linewidth]{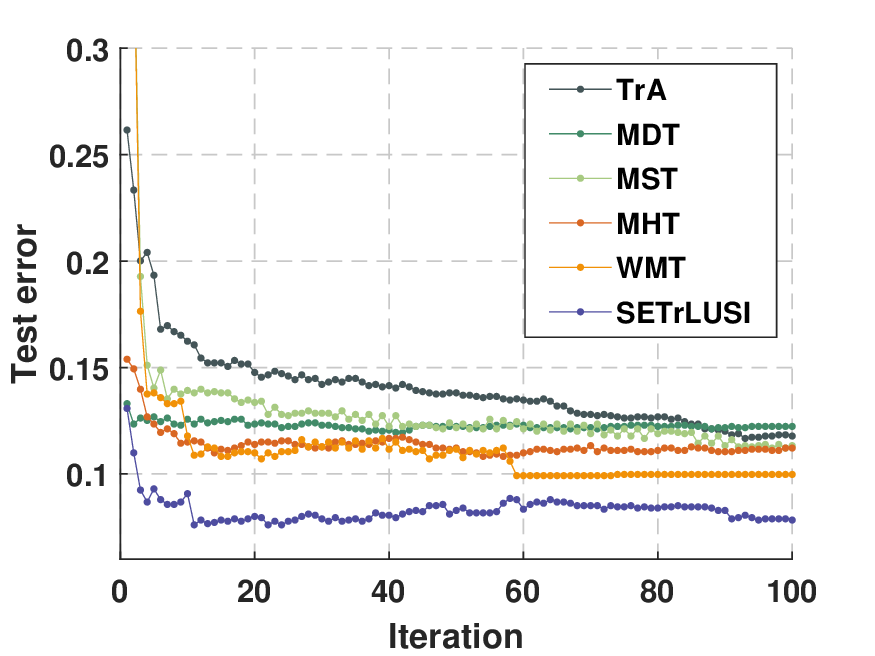}
		\caption*{(b) CR2S}
	\end{minipage}	
	\begin{minipage}{0.245\linewidth}
		\centering
		\includegraphics[width=1\linewidth]{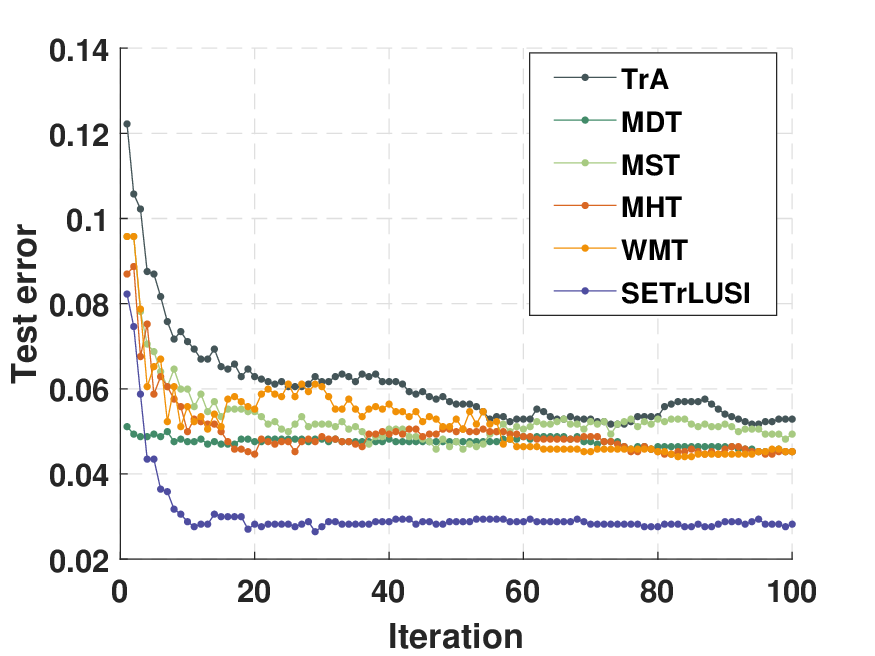}
		\caption*{(c) CSR2T}
	\end{minipage}
	\begin{minipage}{0.245\linewidth}
		\centering
		\includegraphics[width=1\linewidth]{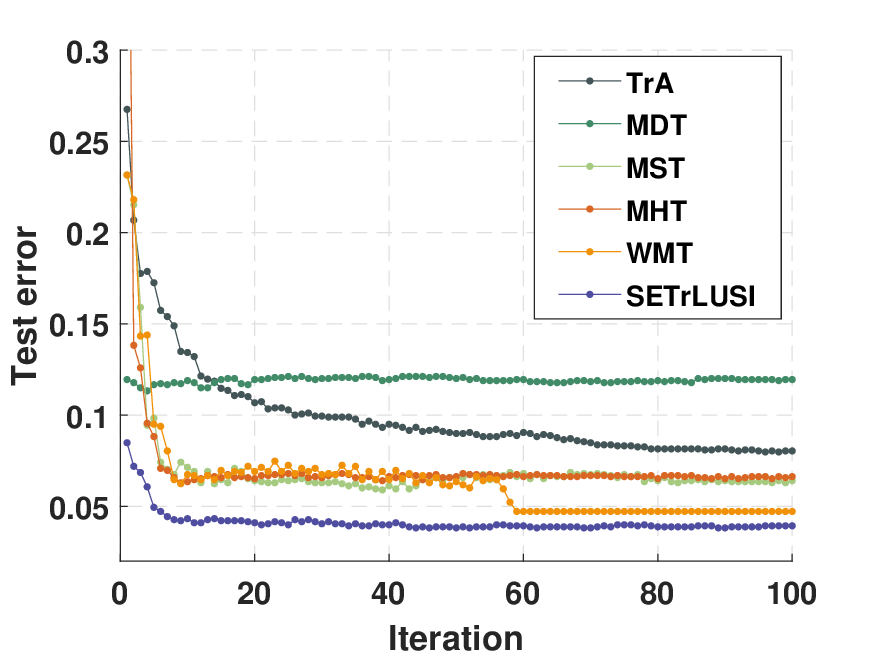}
		\caption*{(d) CST2R}
	\end{minipage}
	\caption{The convergence of six ensemble methods.}
	\label{20Con}
\end{figure}

%	\subsubsection{Time complexity}	
%	To validate the analysis of time complexity presented earlier, we recorded the time taken by all methods for 100 iterations and compared the time consumption.\par
%		
%	Figure \ref{time} illustrates the time consumption across three categories of data with varying dimensions, where the time for each category represents the average of all datasets, displayed on a logarithmic scale. It clearly shows that, on the lower-dimensional UCI datasets, SETrLUSI has minimal time consumption. However, as the dimensionality increases, MHTL may consume less time than SETrLUSI due to its time complexity being influenced by the number of clustered samples. Additionally, 3SW, utilizing logistic regression as its base learner, is less affected by dimensionality, resulting in its time consumption also being lower than that of SETrLUSI.
%	
%	\begin{figure}[H]
	%		\centering
	%		\includegraphics[width=0.7\linewidth]{timecost}
	%		\caption{The time consumption comparison across three types of data with different dimensions ($d_{UCI}\ll d_{20 News}\ll d_{VLSC}$).}
	%		\label{time}
	%	\end{figure}

\subsubsection{Source domain diversity analysis}
To validate whether SETrLUSI can effectively leverage source domain knowledge for the target domain learner across increasingly diverse source domains, twelve similar but different domains are constructed as in the left panel of Figure \ref{MT}. The experimental results in the right panel of Figure \ref{MT} show that SETrLUSI exhibits steady performance improvement as the number of source domains increases, consistently achieving higher AC compared to the baseline methods. The Std reveals that the stability of predictions improves with the inclusion of more source domains, with SETrLUSI demonstrating superior stability over the competing methods.

%		
%		as illustrated in the figure \ref{ddsd}, with the first domain designated as the target domain and the remains serving as source domains. Each source domain consists of 500 positive samples and 500 negative samples. \par
%		\begin{figure}[H]
	%			\centering
	%			\includegraphics[width=0.6\linewidth]{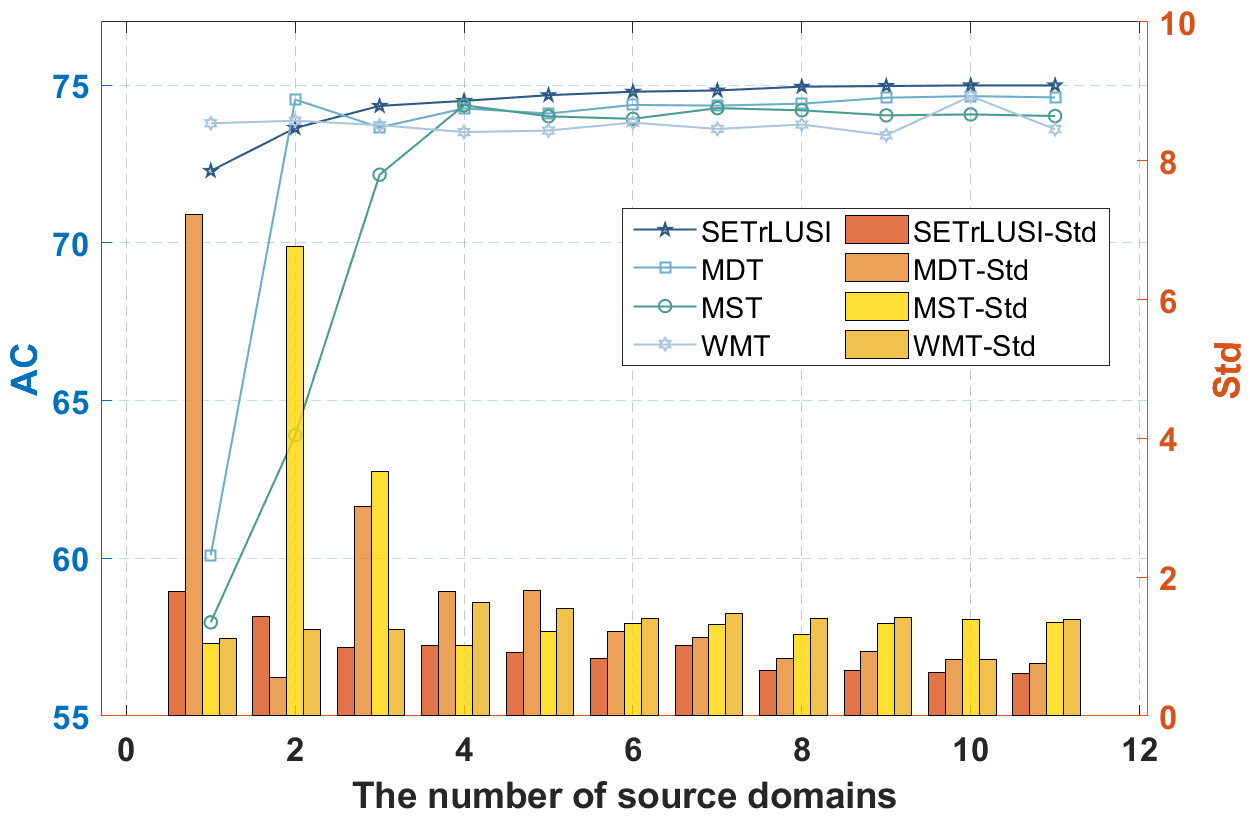}
	%			\caption{The source diversity experiment.}
	%			\label{MT}
	%		\end{figure}
\begin{figure}[h]
	\centering
	\begin{minipage}{0.55\linewidth}
		\centering
		\includegraphics[width=1\linewidth]{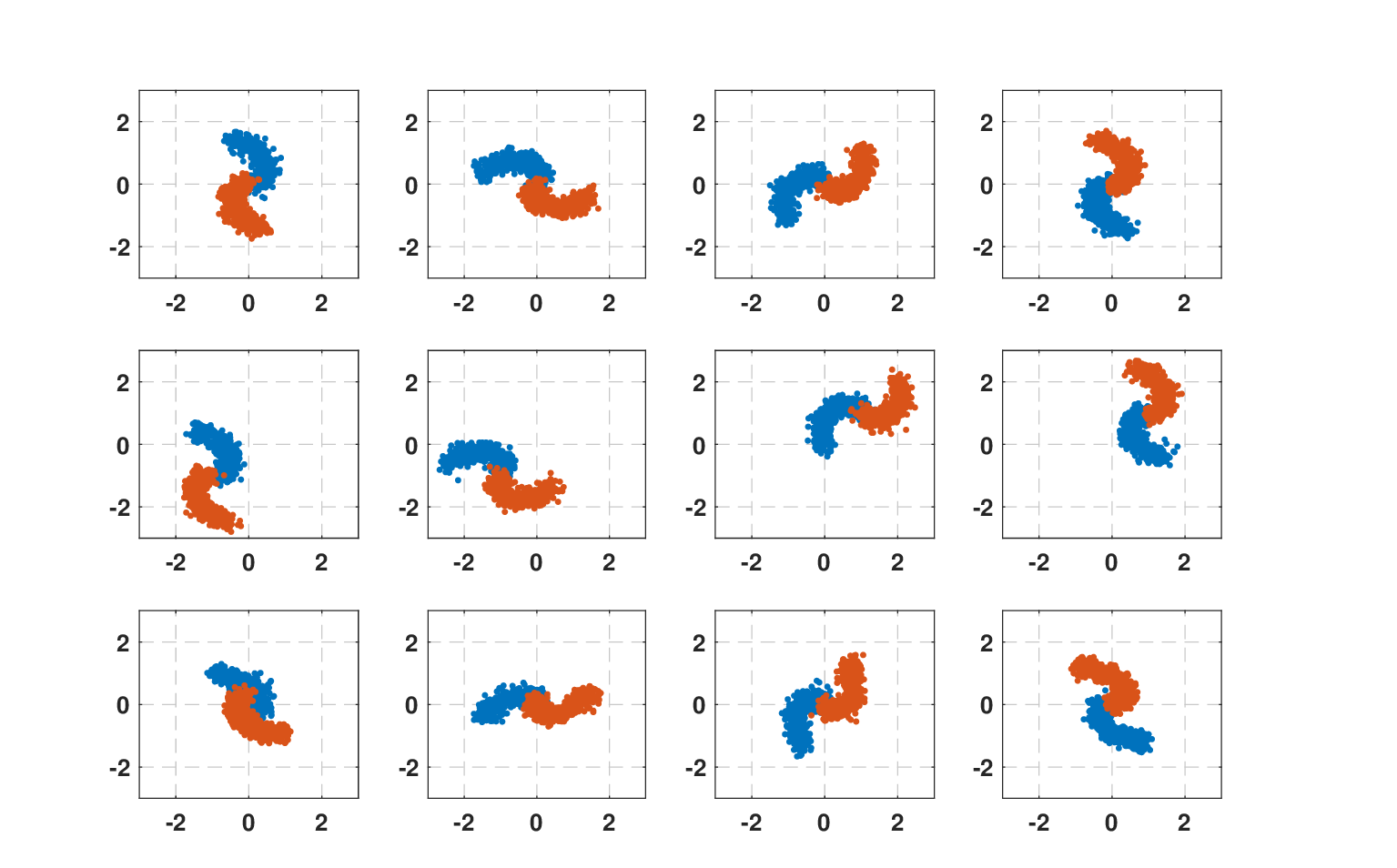}
		%				\caption*{(a) CS2R}
	\end{minipage}
	\begin{minipage}{0.44\linewidth}
		\centering
		\includegraphics[width=1\linewidth]{MT}
		%				\caption*{(b) CR2S}
	\end{minipage}	
	\caption{The 12 domains that are constructed by adjusting the rotation angle, center position, and the compactness of classes (left panel) and the results of source domain diversity analyze (right panel).}
	\label{MT}
\end{figure}

\section{Conclusion}\label{Conclusion}
This paper proposes a novel multi-source domain knowledge transfer method, Stochastic Ensemble Multi-Source Transfer Learning Using Statistical Invariant (SETrLUSI). It integrates source and target domain knowledge into the target domain learner by statistical invariants as a form of the weak mode of convergence with stochastic selection, which improves the diversity of the weak learners and enhances the stability of the ensemble learner. To further utilize the source domains and limited target domain efficiently, SETrLUSI employs proportional sampling for the source domains and bootstrapping for the target domain, which is considered to reflect the original distribution of the target domain and reduce the model's time complexity. \par

However, the computational time of SETrLUSI is influenced by the form of predicates. For example, when a feature predicate is selected in high-dimensional data, it will result in a substantial time cost. Addressing this challenge will be a focus of our future work. The code of SETrLUSI can be found on Github \footnote{https://github.com/HDSYW/SETrLUSI}.

\newpage
\bibliographystyle{IEEEtran}	
\bibliography{cite}

%%%%%%%%%%%%%%%%%%%%%%%%%%%%%%%%%%%%%%%%%%%%%%%%%%%%%%%%%%%%

%%%%%%%%%%%%%%%%%%%%%%%%%%%%%%%%%%%%%%%%%%%%%%%%%%%%%%%%%%%%
\newpage
\appendix

\section{Appendices}
%Technical appendices with additional results, figures, graphs and proofs may be submitted with the paper submission before the full submission deadline (see above), or as a separate PDF in the ZIP file below before the supplementary material deadline. There is no page limit for the technical appendices.

\subsection{The proofs of Proposition 1 and Proposition 2}\label{Some proofs}
\setcounter{proposition}{0}
\begin{proposition}
%	Assume a sample set $\{(\boldsymbol{x}_k,y_k)\}^m_{k=1}$, where $\boldsymbol{x}\in \boldsymbol{X}$ and $y\in\boldsymbol{Y}\in\{0,1\}$, drawn from a random
%	variable $D$ following the joint distribution $ P^*(\boldsymbol{X}, \boldsymbol{Y})$. 
	By implementing Algorithm \ref{alg:Al1}, without considering noise, for the ensemble learner $\hat{f}(\boldsymbol{x})=\sum_{h=1}^{H}\hat{\beta}^{h}P^h(j|\boldsymbol{x})=\sum_{h=1}^{H}\hat{\beta}^{h}f^h(\boldsymbol{x})$, it has
\begin{equation}
	(P^*(y|\boldsymbol{x})-\hat{f}(\boldsymbol{x}))^2\le	\mathbb{E}_{D_{\mathcal{T}^l}}[P^*(y|\boldsymbol{x})-f^h(\boldsymbol{x})]^2.
\end{equation}
\end{proposition}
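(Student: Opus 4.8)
The plan is to reduce the statement to the convexity of the squared‑error map together with the fact that the ensemble weights form a probability vector. First I would record the two structural facts needed. By the Remark each weak learner has error $\epsilon^h\in[0,0.5)$, hence $\beta^h=1-\epsilon^h/(1-\epsilon^h)\ge 0$, and after normalization $\hat{\beta}^h=\beta^h/\sum_{h}\beta^h$ satisfies $\hat{\beta}^h\ge 0$ and $\sum_{h=1}^H\hat{\beta}^h=1$. Then, fixing $\boldsymbol{x}$ and abbreviating $d:=P^*(y|\boldsymbol{x})$ (a fixed number, which is exactly where the ``without considering noise'' hypothesis is used) and $a_h:=d-f^h(\boldsymbol{x})$, the ensemble residual is the convex combination $d-\hat f(\boldsymbol{x})=\sum_{h}\hat{\beta}^h a_h$.

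Next I would apply Jensen's inequality to $t\mapsto t^2$ (equivalently Cauchy--Schwarz on the vectors $(\sqrt{\hat\beta^h})_h$ and $(\sqrt{\hat\beta^h}\,a_h)_h$), which yields the pointwise bound
\[
\bigl(P^*(y|\boldsymbol{x})-\hat f(\boldsymbol{x})\bigr)^2=\Bigl(\sum_{h}\hat\beta^h a_h\Bigr)^2\le \sum_{h}\hat\beta^h a_h^2=\sum_{h}\hat\beta^h\bigl(P^*(y|\boldsymbol{x})-f^h(\boldsymbol{x})\bigr)^2 .
\]
Equivalently this is the Krogh--Vedelsby ``ambiguity decomposition'': the right-hand side minus the left-hand side equals $\sum_h\hat\beta^h(f^h(\boldsymbol{x})-\hat f(\boldsymbol{x}))^2\ge 0$, which also makes transparent the accompanying Remark that larger dispersion of the $f^h(\boldsymbol{x})$ widens the gap. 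At this point the ensemble squared error is already controlled by the $\hat\beta$-weighted average of the individual squared errors.

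The remaining step is to identify $\sum_{h}\hat\beta^h(P^*(y|\boldsymbol{x})-f^h(\boldsymbol{x}))^2$ with $\mathbb{E}_{D_{\mathcal{T}^l}}[P^*(y|\boldsymbol{x})-f^h(\boldsymbol{x})]^2$. Here I would invoke that the weak learners are trained on i.i.d.\ bootstrap resamples of $D_{\mathcal{T}^l}$ (the ``Stochastic target domain'' step of Algorithm~\ref{alg:Al1}), so $f^1,\dots,f^H$ are identically distributed and $\mathbb{E}_{D_{\mathcal{T}^l}}(P^*(y|\boldsymbol{x})-f^h(\boldsymbol{x}))^2$ does not depend on $h$. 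Taking expectations over the bootstrap randomness on both sides of the pointwise bound and using $\sum_h\hat\beta^h=1$ then collapses the right-hand side to this common value, giving the claim (with the expectation understood on the left as well; the displayed form is the natural deterministic shorthand, exactly valid once one conditions on a realization and uses the ambiguity identity).

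I expect the main obstacle to be precisely this last passage from the realized $\hat\beta$-weighted average to the expectation $\mathbb{E}_{D_{\mathcal{T}^l}}$: the weights $\hat\beta^h$ are themselves data-dependent and correlated with the residuals $(P^*(y|\boldsymbol{x})-f^h(\boldsymbol{x}))^2$, so one cannot naively factor $\mathbb{E}[\hat\beta^h(P^*-f^h)^2]$ as $\mathbb{E}[\hat\beta^h]\,\mathbb{E}[(P^*-f^h)^2]$. I would resolve this by arguing that, conditional on the full training pool, the assignment of weights is exchangeable across the identically distributed learners (so the cross term vanishes after summing over $h$), or, more conservatively, by stating the clean intermediate inequality $(P^*(y|\boldsymbol{x})-\hat f(\boldsymbol{x}))^2\le\sum_h\hat\beta^h(P^*(y|\boldsymbol{x})-f^h(\boldsymbol{x}))^2$ and noting that its right-hand side is a convex combination of quantities each with expectation $\mathbb{E}_{D_{\mathcal{T}^l}}[P^*(y|\boldsymbol{x})-f^h(\boldsymbol{x})]^2$ — which is all the subsequent Remarks actually use.
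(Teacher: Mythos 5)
Your proposal is correct and is essentially the paper's own argument: both proofs reduce to Jensen's inequality for the square under the $\hat{\beta}$-weighted distribution (the paper writes it as nonnegativity of $\textup{Var}(f^h(\boldsymbol{x}))$ in a bias--variance-style expansion, you write it as convexity/Cauchy--Schwarz/the ambiguity decomposition, which is the same algebra), and both rest on the same assumption identifying the realized $\hat{\beta}$-weighted average with the expectation $\mathbb{E}_{D_{\mathcal{T}^l}}$ (the paper imposes $\mathbb{E}_{D_{\mathcal{T}^l}}[f^h(\boldsymbol{x})]=\sum_{h}\hat{\beta}^{h}f^h(\boldsymbol{x})$ by fiat, you place the analogous identification on the squared residuals). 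You are in fact more explicit than the paper about where this identification is a leap, which is a presentational improvement rather than a different proof.
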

\begin{proof}
	Given the expected squared risk as $\mathbb{E}_{D_{\mathcal{T}^l}}[P^*(y|\boldsymbol{x})-f^h]^2$, over possible training set drawn from $D$, showing the following three terms decomposition
	\begin{equation}\label{P1}
		\mathbb{E}_{D_{\mathcal{T}^l}}[P^*(y|\boldsymbol{x})-f^h(\boldsymbol{x})]^2 =(P^*(y|\boldsymbol{x}))^2-2P^*(y|\boldsymbol{x})\mathbb{E}_{D_{\mathcal{T}^l}}[\hat{f}^h(\boldsymbol{x})]+\mathbb{E}_{D_{\mathcal{T}^l}}[(\hat{f}^h(\boldsymbol{x}))^2].
	\end{equation}
	By utilizing  $\textup{Var}(\hat{f}^h(\boldsymbol{x}))=\mathbb{E}_{D_{\mathcal{T}^l}}[(f^h(\boldsymbol{x}))^2]-\mathbb{E}^2_{D_{\mathcal{T}^l}}[f^h(\boldsymbol{x})]$, it gives 
	\begin{equation}
		\begin{aligned}
			&\mathbb{E}_{D_{\mathcal{T}^l}}[P^*(y|\boldsymbol{x})-f^h(\boldsymbol{x})]^2 =(P^*(y|\boldsymbol{x}))^2-2P^*(y|\boldsymbol{x})\mathbb{E}_{D_{\mathcal{T}^l}}[f^h(\boldsymbol{x})]+\textup{Var}(f^h(\boldsymbol{x}))+\mathbb{E}^2_{D_{\mathcal{T}^l}}[f^h(\boldsymbol{x})]\\
			&~~~~~~~~~~~~~~~~~~~~~~~~~~~~~~~~~~~~~~~~~~\ge(P^*(y|\boldsymbol{x}))^2-2P^*(y|\boldsymbol{x})\mathbb{E}_{D_{\mathcal{T}^l}}[f^h(\boldsymbol{x})]+\mathbb{E}^2_{D_{\mathcal{T}^l}}[f^h(\boldsymbol{x})]\\
			&~~~~~~~~~~~~~~~~~~~~~~~~~~~~~~~~~~~~~~~~~~\ge (P^*(y|\boldsymbol{x})-\mathbb{E}_{D_{\mathcal{T}^l}}[f^h(\boldsymbol{x})])^2.
		\end{aligned}
	\end{equation}
	Due to $\mathbb{E}_{D_{\mathcal{T}^l}}[f^h]=\frac{1}{H}\sum_{h=1}^{H}f^h$ overlooking the impact of differences in the performance of weak learners on the final outcome. Let us assume that the weak learners follow a certain probability distribution that depends on $\epsilon^h$, we have the formulation of expectation as
	\begin{equation}
		\mathbb{E}_{D_{\mathcal{T}^l}}[f^h(\boldsymbol{x})]=\hat{f}(\boldsymbol{x})=\sum_{h=1}^{H}\hat{\beta}^{h}f^h(\boldsymbol{x}),
	\end{equation}
	where $\hat{\beta}^{h}=\beta^h/\sum_{h=1}^{H}\beta^h$, $\beta^h=1-\epsilon^h/(1-\epsilon^h)$, which increases as the error of the weak learner decreases. Then, \eqref{P1} is proved.
\end{proof}
\begin{proposition}
	Given $H$ independent weak learners and their corresponding decision values $f^h(\boldsymbol{x})$, $h=1,...,H$, then an upper bound on the misclassification probability of SETrLUSI deviates from its true value $y$ is obtained as follows:
	\begin{equation}
		P(|S_e-y|\ge \frac{1}{2})\le 2\textup{exp}{\left(-\frac{1}{2\sum_{h=1}^{H}(\hat{\beta}^h)^2}\right)}.
	\end{equation}
%	where notation \textup{exp}$(k)$ denotes $e^k$.
\end{proposition}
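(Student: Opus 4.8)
The plan is to recognize the left-hand side $P(|S_e - y| \ge \tfrac{1}{2})$ as a deviation probability for a weighted average of bounded independent random variables and to apply Hoeffding's inequality. First I would observe that each weak learner output $f^h(\boldsymbol{x}) = P^h(y=1\mid\boldsymbol{x})$ takes values in $[0,1]$, so after subtracting the (common) target value $y\in\{0,1\}$ we obtain bounded quantities; treating $f^h(\boldsymbol{x})$ as independent random variables across $h$ (the independence hypothesis is explicitly assumed in the statement), the quantity $S_e - y = \sum_{h=1}^{H}\hat\beta^h\bigl(f^h(\boldsymbol{x}) - y\bigr)$ is a weighted sum of independent random variables, where the weights satisfy $\sum_{h=1}^H \hat\beta^h = 1$ by construction of $\hat\beta^h = \beta^h/\sum_{h=1}^H\beta^h$.

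Next I would set up the Hoeffding bound. Writing $Z_h = \hat\beta^h\bigl(f^h(\boldsymbol{x}) - y\bigr)$, each $Z_h$ lies in an interval of length $\hat\beta^h$ (since $f^h - y$ ranges over an interval of length at most $1$), and the $Z_h$ are independent. Hoeffding's inequality then gives, for the centered sum $\sum_h (Z_h - \mathbb{E}Z_h)$,
\begin{equation}
P\left(\Bigl|\sum_{h=1}^H (Z_h - \mathbb{E}Z_h)\Bigr| \ge t\right) \le 2\exp\!\left(-\frac{2t^2}{\sum_{h=1}^H (\hat\beta^h)^2}\right).
\end{equation}
Taking $t = \tfrac{1}{2}$ and arguing that the centered deviation of $S_e$ from its mean controls the deviation of $S_e$ from $y$ (treating $\mathbb{E}[S_e]$ as matching $y$ in the ideal/noiseless setting, consistent with the framing preceding Proposition 1), the right-hand side becomes $2\exp\!\bigl(-\tfrac{1}{2\sum_{h=1}^H(\hat\beta^h)^2}\bigr)$, which is exactly \eqref{ineq}.

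The main obstacle I anticipate is justifying the step that replaces $|S_e - y|$ by a centered deviation $|S_e - \mathbb{E}[S_e]|$ so that Hoeffding applies cleanly: one needs either the assumption that $\mathbb{E}[f^h(\boldsymbol{x})] = y$ for each weak learner (an unbiasedness/noiseless condition), or a more careful argument absorbing the bias into the bound. I would handle this by invoking the noiseless setting already used in Proposition 1, under which each $f^h$ is an unbiased estimate of the true conditional probability and hence $\mathbb{E}[S_e] = y$ on the relevant event, so that $|S_e - y| = |S_e - \mathbb{E}[S_e]|$ and the Hoeffding tail bound transfers directly. A secondary point to check is the exact constant: Hoeffding with ranges of length $\hat\beta^h$ yields the factor $2$ in the exponent's numerator ($2t^2 = 2\cdot\tfrac14 = \tfrac12$), which cancels against the $\tfrac12$ to give the stated $-1/(2\sum_h(\hat\beta^h)^2)$; I would make sure the interval-length bookkeeping produces precisely this and not a weaker constant.
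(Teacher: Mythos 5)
Your proposal is correct and follows essentially the same route as the paper: both apply Hoeffding's inequality to the weighted sum $S_e=\sum_h\hat\beta^h f^h(\boldsymbol{x})$ of independent, bounded weak-learner outputs (each summand ranging over an interval of length $\hat\beta^h$, with $\sum_h\hat\beta^h=1$), invoke the idealized unbiasedness assumption $\mathbb{E}[f^h(\boldsymbol{x})]=y$ so that $\mathbb{E}[S_e]=y$, and set the deviation threshold to $\tfrac12$ to obtain the stated bound. Your explicit interval-length bookkeeping and the constant check ($2t^2=\tfrac12$) match the paper's computation exactly.
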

\begin{proof}
	Assuming the independence of the weak learners, we can apply Hoeffding inequality \cite{hoeffding1994probability} to derive the following bound
	\begin{equation}\label{ineq2}
		P(|S_e-\mathbb{E}[S_e]|\ge \delta)\le 2\textup{exp}{\left(-\frac{2\delta^2}{\sum_{h=1}^{H}(\hat{\beta}^h)^2}\right)}.
	\end{equation}
	We further assume that the expectation $\mathbb{E}[f^h(\boldsymbol{x})]$ accurately reflects the conditional probability of the true label $y$ in the ideal case, which means 
	\begin{equation}
		\mathbb{E}\left[\sum_{h=1}^{H}\hat{\beta}^hf^h(\boldsymbol{x})\right]=\sum_{h=1}^{H}\hat{\beta}^h\mathbb{E}\left[f^h(\boldsymbol{x})\right]=y\sum_{h=1}^{H}\hat{\beta}^h=y.
	\end{equation}
	By setting 
	$\delta=\frac{1}{2}$, we obtain \eqref{ineq}.
\end{proof}

\subsection{The detail description of predicates}\label{The detail description of predicates}

\begin{itemize} 
	\item $\psi(\boldsymbol{x})=f_{\mathcal{S}^i}(\boldsymbol{x})$ : 
	This predicate integrates the performance of the information of source domain's model into the training process of the target domain model, leveraging the confidence interval information derived from the source domain as a weight. The corresponding SI is 
	\begin{equation}
		\sum_{k=1}^{q}f_{\mathcal{S}^i}(\boldsymbol{x}_k)f(\boldsymbol{x}_k)=\sum_{k=1}^{q}f_{\mathcal{S}^i}(\boldsymbol{x}_k)y_k, 
	\end{equation}			
	where $f_{\mathcal{S}^i}(\boldsymbol{x})=w_{\mathcal{S}^i}^\top \boldsymbol{x} +b_{\mathcal{S}^i}$ is a decision function trained from $D_{\mathcal{S}^i}$, which employs a standard linear SVM and selects the regularization parameter randomly to form the learner.
	
	\item $\psi(\boldsymbol{x})=g_{\mathcal{S}^i}(\boldsymbol{x})$ : This predicate gives a SI that expresses the ratio of correctly classified positive samples from the target domain by $f_{\mathcal{S}^i}(\boldsymbol{x})$ is equal to the expectation of samples where both $f_{\mathcal{S}^i}(\boldsymbol{x})$ and $f(\boldsymbol{x})$ correctly classify the samples.
	\begin{equation}
		\sum_{k=1}^{q}g_{\mathcal{S}^i}(\boldsymbol{x}_k)f(\boldsymbol{x}_k)=\sum_{k=1}^{q}g_{\mathcal{S}^i}g(\boldsymbol{x}_k)y_k, 
	\end{equation}
	where $g_{\mathcal{S}^i}(\boldsymbol{x})$ is constructed based on $f_{\mathcal{S}^i}(\boldsymbol{x})$ and obtained by taking the sign function of $f_{\mathcal{S}^i}(\boldsymbol{x})$.
	\begin{equation}
		g_{\mathcal{S}^i}(\boldsymbol{x}) 	=\left\{\begin{array}{l}
			0,\ \mathrm{if}\ 	f_{\mathcal{S}^i}\left( \boldsymbol{x}\right) \le 0,\\
			1,\ \mathrm{if}\ 	f_{\mathcal{S}^i}\left( 	\boldsymbol{x}\right) >0.\\
		\end{array} \right.
	\end{equation}
	
	\item $\psi(\boldsymbol{x})=\boldsymbol{x}^{s}_{\mathcal{S}^i}$ : This predicate conveys knowledge about the feature, but the feature it provides is obtained through correlation analysis in the source domain, as opposed to the direct random selection in the target domain.
	\begin{equation}
		\sum_{k=1}^{q}\boldsymbol{x}_{k({\mathcal{S}^i})}^{s} 		f(\boldsymbol{x}_k)=\sum_{k=1}^{q}\boldsymbol{x}^{s}_{k({\mathcal{S}^i})}y_k, 
	\end{equation}
	where we utilize the absolute value of the weight $\boldsymbol{w}$ from a linear SVM as the criteria for selecting the $s$-th feature that has the strongest relationship between $D_{\mathcal{S}^i}$ with $\boldsymbol{Y}_{\mathcal{S}^i}$  and gives a total of $d$ statistical invariants.\par
	
	\item $\psi(\boldsymbol{x})=\boldsymbol{x}^{s_1}_{\mathcal{S}^i}\boldsymbol{x}^{s_2}_{\mathcal{S}^i},~ s_1, s_2=1,...,d$ : Based on $\psi(\boldsymbol{x})=\boldsymbol{x}^{s}_{\mathcal{S}^i}$, we can also identify the two most correlated features $s_1$ and $s_2$ to form the SI.
	\begin{equation}
		\sum_{k=1}^{q}\boldsymbol{x}^{s_1}_{k(\mathcal{S})}\boldsymbol{x}_{k({\mathcal{S}^i})}^{s_2} f(\boldsymbol{x}_k)=\sum_{k=1}^{q}\boldsymbol{x}^{s_1}_{k({\mathcal{S}^i})}\boldsymbol{x}_{k({\mathcal{S}^i})}^{s_2} y_k,
	\end{equation}
	where this results in $\frac{d(d+1)}{2}$ possible statistical invariants.
	
	\item $\psi(\boldsymbol{x})=\sum\limits_{k_2=1}^nK(\boldsymbol{x},\boldsymbol{x}_{k_2(\mathcal{S}^i)})$ : This predicate incorporates sample knowledge from the source domain into the training of the target domain classifier by utilizing the optimal parameter derived from predicting the known samples from the target domain by source domains. 
	\begin{small}
		\begin{equation}
			\sum_{k_1=1}^{q}{f(\boldsymbol{x}_{k_1}) 				\sum\limits_{k_2=1}^nK(\boldsymbol{x}_{k_2},\boldsymbol{x}_{k_2(\mathcal{S}^i)})}
			=					\sum_{k_1=1}^{q}{y_{k_1}\sum\limits_{k_2=1}^nK(\boldsymbol{x}_{k_1},\boldsymbol{x}_{k_2(\mathcal{S}^i)})}, 
		\end{equation}
	\end{small}
	where $K(\boldsymbol{x}_{k_1},\boldsymbol{x}_{k_2(\mathcal{S}^i)}) =\sum\limits_{k_2=1}^ne^{-\frac{\lVert \boldsymbol{x}_{k_1}-\boldsymbol{x}_{k_2(\mathcal{S}^i)}\rVert ^2}{2\sigma ^2}}$ is the radial basis function kernel, $\sigma$ is the kernel parameter.	
	
	%	\item $\psi(\boldsymbol{x})=\boldsymbol{0}$: When the knowledge extracted from the source domain decreases the learning performance in the target domain, negative transfer occurs \cite{zhang2022survey}. Therefore, we additionally include the option of not adding predicates to the selection range, allowing RTLUSI to randomly choose from a broader set of possibilities.			
\end{itemize}

\begin{itemize}
	\item $\psi(\boldsymbol{x})=\bar{\boldsymbol{x}}$: This predicate provides an SI, which is expected to be the average of the center of mass of the prediction to be equal the true value of class 1 $(y=1)$ \cite{vapnik2020complete}. 
	\begin{equation}
		\sum_{k=1}^{q}\bar{\boldsymbol{x}}_k 			f(\boldsymbol{x}_k)=\sum_{k=1}^{q}\bar{\boldsymbol{x}}_ky_k,
	\end{equation}
	where $\bar{x}$ is obtained by summing each column of features and then taking the average.
	
	\item $\psi(\boldsymbol{x})=\boldsymbol{x}^s_{\mathcal{T}},~s=1,...,d$ : This predicate gives $d$ statistical invariants for randomly selecting any one feature to pursue the equal of the center of mass between the prediction $f(\boldsymbol{x})$ and true value $\boldsymbol{Y}$.
	\begin{equation}
		\sum_{k=1}^{q}\boldsymbol{x}_{k(\mathcal{T})}^s 		f(\boldsymbol{x}_{k})=\sum_{i=1}^{q}\boldsymbol{x}_{k(\mathcal{T})}^sy_k, 
	\end{equation}			
	where $s$ is determined randomly in each selection.
	
	\item $\psi(\boldsymbol{x})=\bar{\boldsymbol{x}}\cdot\bar{\boldsymbol{x}}$: This predicate eagles to maintain the average of the expectation of the covariance matrix computed using $f(\cdot)$ equal to the corresponding matrix computed from the true value $\boldsymbol{Y}$ by the following SI \cite{vapnik2020complete}
	\begin{equation}
		\sum_{k=1}^{q}\bar{\boldsymbol{x}}_k\bar{\boldsymbol{x}}_k^\top 		f(\boldsymbol{x}_k)=\sum_{k=1}^{q}\bar{\boldsymbol{x}}_k\bar{\boldsymbol{x}}_k^\top y_k, 
	\end{equation}
	where $\bar{x}$ is given as same as the one in predicate $\psi(\boldsymbol{x})=\bar{x}$.
	
	\item $\psi(\boldsymbol{x})=\boldsymbol{x}^{s_1}_{k(\mathcal{T})}\cdot \boldsymbol{x}^{s_2}_{k(\mathcal{T})},~s_1, s_2=1,...,d$ :
	To further utilize the relationship between the covariance matrix calculated from the prediction $f(\boldsymbol{x})$ and the true value $\boldsymbol{Y}$ for each dimension, this predicate will provide $\frac{d(d+1)}{2}$
	statistical invariants to expand the range of random selections.
	\begin{equation}\label{RamXX}
		\sum_{k=1}^{q}\boldsymbol{x}^{s_1}_{k(\mathcal{T})}\boldsymbol{x}_{k(\mathcal{T})}^{s_2} 	f(\boldsymbol{x}_k)=\sum_{k=1}^{q}\boldsymbol{x}^{s_1}_{k(\mathcal{T})}\boldsymbol{x}_{k(\mathcal{T})}^{s_2} y_k, 
	\end{equation}
	where $d$ is determined randomly in each selection.
	
	\item $\psi(\boldsymbol{x})=\boldsymbol{1}$: This predicate restricts the function $f(\cdot)$ to those for which the frequency of prediction $f(\boldsymbol{x})$ from  class 1 $(y=1)$ matches the frequency observed in the training data. The statistical invariant is made as follows \cite{vapnik2020complete}:
	\begin{equation}
		\sum_{k=1}^{q}f(\boldsymbol{x}_k)=\sum_{k=1}^{q}y_k.
	\end{equation}
\end{itemize}
Assuming that the regularization parameter for $\psi(\boldsymbol{x})=f_S(\boldsymbol{x})$ finding from $n_{fs}$ different values, and the parameter $\sigma$ for $\psi(\boldsymbol{x})=\sum_{k_2=1}^nK(\boldsymbol{x},\boldsymbol{x}_{S_k^i})$ has $n_{kernel}$ possible values. Then,
all above predicates totally could generate $(n_{fs}+n_{gs}+d+d(d+1)/2+n_{kernel}+1+d+1+d(d+1)/2+1)$ different statistical invariants due to the different training in each iteration, where $n_{fs}$, $n_{gs}$, and $n_{kernel}$ are the number of parameter in $\psi(\boldsymbol{x})=f_{\mathcal{S}^i}(\boldsymbol{x})$, $\psi(\boldsymbol{x})=g_{\mathcal{S}^i}(\boldsymbol{x})$, and $\psi(\boldsymbol{x})=\sum_{k_2=1}^nK(\boldsymbol{x},\boldsymbol{x}_{k_2(\mathcal{S}^i)})$. These statistical invariants consider the knowledge contained within the target domain itself, as well as the knowledge from the source domain, including sample, feature, and parameter.

\subsection{Time complexity}\label{Comparison of the Time Complexity of Related Transfer Learning Methods}
According to Algorithm \ref{alg:Al1}, the computational complexity of SETrLUSI is analyzed as follows:\par
1) The time complexity of line 6 is $\mathcal{O}(q)$. \par
2) The time complexity of line 7 is $\mathcal{O}(\gamma n_{\mathcal{S}^i})$.\par
3) The time complexity of line 8 is $\mathcal{O}(1+T_\psi)$, which is the time complexity of calculating one selected predicate. \par
4) The time complexity $\mathcal{O}(dp^2)$ in line 10 for a linear programming problem.\par
5) The time complexity of line 11 to line 13 is $\mathcal{O}(q+1+1)$, and line 18 is $\mathcal{O}(NlogN+1)$ for finding the minima weak learner.\par

Therefore, for $N$ source domains and $H$ iterations, the time complexity of SETrLUSI is $\mathcal{O}(HN(dq^2+T_\psi))$, where $\textup{max}\{q,~NlogN,~\gamma n_{\mathcal{S}^i}\}\ll dq^2$.
Now, we present a simple comparison between our proposed model and other related transfer learning models. \par

From the Table \ref{timecompare},  TrAdaboost faces a significant increase in time complexity when applied to multi-source domain as a single-source model. MSDTrAdaboost, and METL, being two-phase models, have their time complexity affected by two factors: the number of clusters and the involvement of multiple base classifiers. Moreover, the MSDTrAdaboost, MSTrAdaboost and WMSTrAdaboost have similar time complexities, as they primarily spend time on training the weak classifier SVM. SETrLUSI incorporates source domain samples into the training process as statistical invariants, requiring only a small amount of labeled target domain data, which results in a much lower time complexity compared to the above methods, but is influenced by dimension $d$ and the number of $\psi$. Using logistic regression as the base learner, the time complexity of 3SW\_MSTL is comparable with SETrLUSI.\par

\subsection{More details about experimental setting}\label{More details about experimental setting}
In this section, we give more details about experitmental setting for anyone who want to reproduce our results.\par

All boosting models use a linear SVM as the weak classifier, with the regularization parameter selected from \( \{ 2^{-8}, 2^{-6}, \dots, 2^8 \} \). For MHT, the number of clusters \( k \) in \( K \)-means is chosen from the range of 1 to 5, and METL uses the same three base classifiers as in \cite{yang2020multi}. For our SETrLUSI, the parameters \( \tau \) and \( \gamma \) are selected from \( \{ 0.3, 0.5, 0.7, 0.9 \} \) and \( \{ 0.1, 0.3, \dots, 0.9 \} \), respectively. For the part of ensemble learning, the number of iterations is set to 100 and we set the error to 0.499 when eqach weak learner is greater than 0.5, and to 0.001 when it is equal to 0 to avoid experiment interruption. Meanwhile, we take the re-sampling strategy to avoid the interruption of model training caused by the insufficient number of training samples. 

\subsection{Data description}\label{Data Description}

\textbf{UCI}:
For demonstrating the performance of 
SETrLUSI and other comparative models under different distributions between the source and target domains, the Rice \footnote{\url{https://archive.ics.uci.edu/dataset/545/rice+cammeo+and+osmancik}}, Wave \footnote{\url{https://archive.ics.uci.edu/dataset/107/waveform+database+generator+version+1}}, and Two Norm (TW) \footnote{\url{https://www.kaggle.com/datasets/timrie/twonorm}} three UCI datasets are used to construct transfer tasks. By using the $k$-means algorithm, we cluster each dataset into three domains based on three different features, as shown in Table \ref{UCICon} and Figure \ref{UCI}. \par
\setcounter{table}{4}
\begin{table}[h]
	\centering
	\begin{threeparttable}
		\centering
		\fontsize{8}{7}\selectfont
		\caption{The construction of multi-source UCI datasets}
		\label{UCICon}
		\begin{tabular}{c|ccccc}
			\toprule
			Dataset & Dimension & Cluster Feature & \#.$D_{\mathcal{S}^1}$ & \#.$D_{\mathcal{S}^2}$ & \#.$D_{\mathcal{T}}$ \\
			\midrule
			\multirow{3}{*}{Rice} & \multirow{3}{*}{7}& 1\&2\&3 & 1472  & 1134 & 1204 \\
			&& 3\&4\&7 & 1437  & 1149 & 1224 \\
%			&& 1\&3\&4 & 1300  & 1135 & 1375 \\
			\midrule
			\multirow{3}{*}{Wave} &\multirow{3}{*}{21}& 1\&7\&15 & 1818  & 1589 & 1593 \\
			&& 1\& 2\&4 & 1601  & 1541 & 1858 \\
%			&& 1\& 7\&19 & 1579  & 1529 & 1892 \\
			\midrule
			\multirow{3}{*}{TW} & \multirow{3}{*}{20}& 1\&3\&16 & 2887  & 2158 & 2355\\
%			&& 1\&15\&20 & 2269  & 2954 & 2177 \\
			&& 10\&13\&20 & 2432 & 2753 & 2215 \\
			\bottomrule
		\end{tabular}
	\end{threeparttable}
\end{table}

\begin{figure}[H]
	\centering	

%	\begin{minipage}{0.32\linewidth}
%		\centering
%		\includegraphics[width=1\linewidth]{rice134}
%		\caption*{\scriptsize(c) Rice 1\&3\& 4}
%	\end{minipage}
	
%	\begin{minipage}{0.32\linewidth}
%		\centering
%		\includegraphics[width=1\linewidth]{wave1715}
%		\caption*{\scriptsize(d) Wave 1\&7\&15}
%	\end{minipage}
	\begin{minipage}{0.32\linewidth}
		\centering
		\includegraphics[width=1\linewidth]{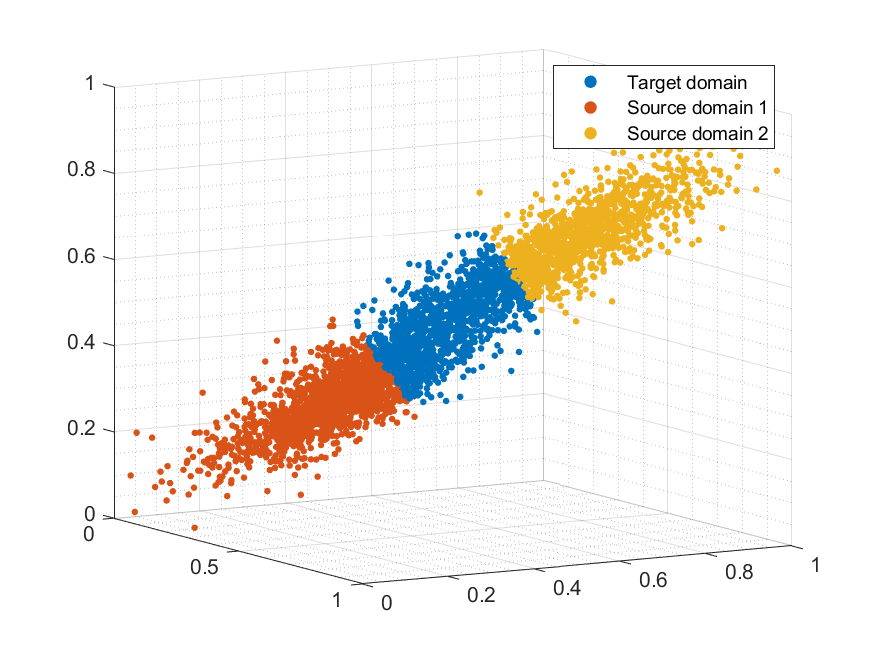}
		\caption*{\scriptsize(a) Rice 1\&2\&3}
	\end{minipage}
	\begin{minipage}{0.32\linewidth}
		\centering
		\includegraphics[width=1\linewidth]{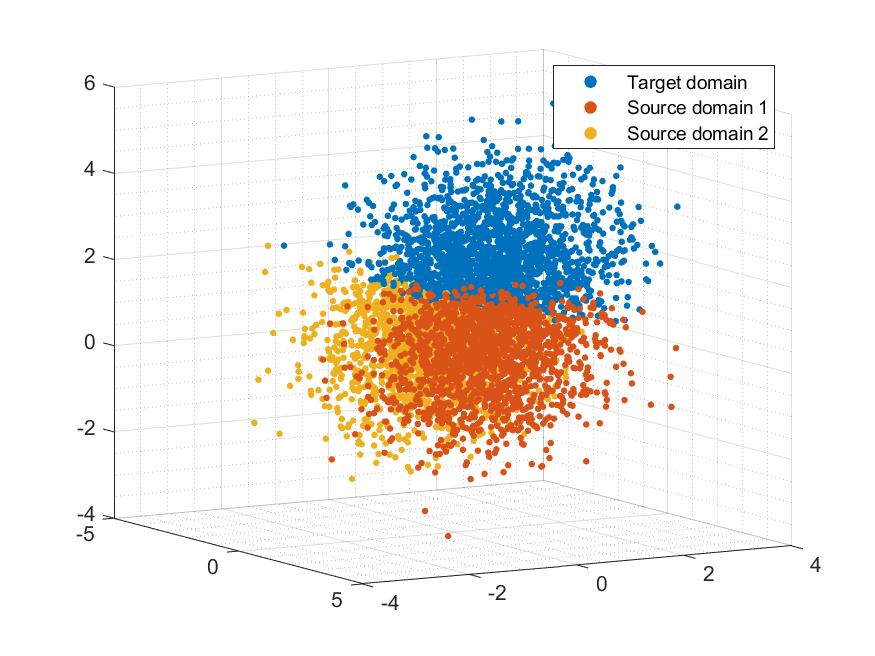}
		\caption*{\scriptsize(b) Wave 1\&2\&4}
	\end{minipage}
	\begin{minipage}{0.32\linewidth}
		\centering
		\includegraphics[width=1\linewidth]{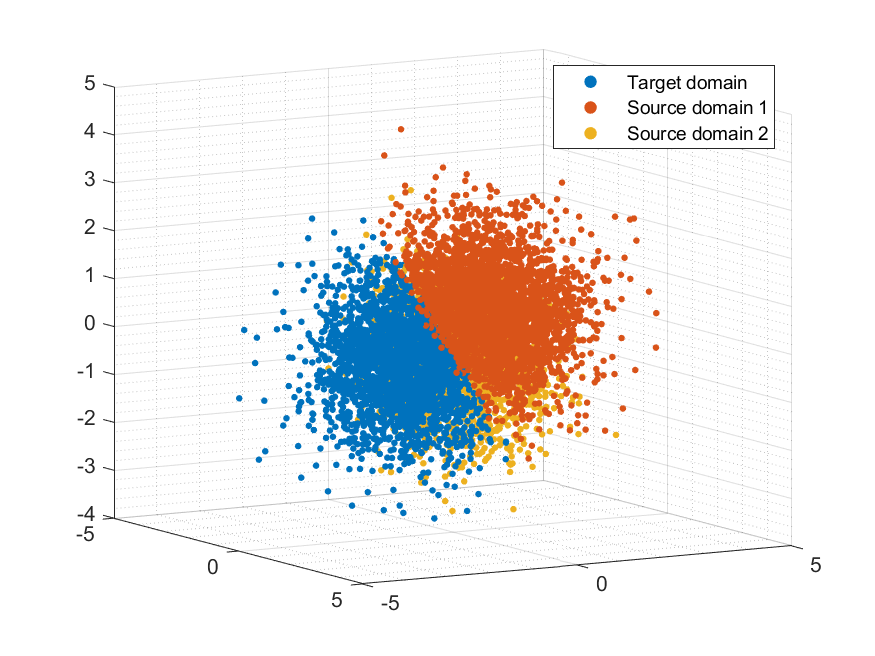}
		\caption*{\scriptsize(c) TW 1\&3\&16}
	\end{minipage}
	
	\begin{minipage}{0.32\linewidth}
		\centering
		\includegraphics[width=1\linewidth]{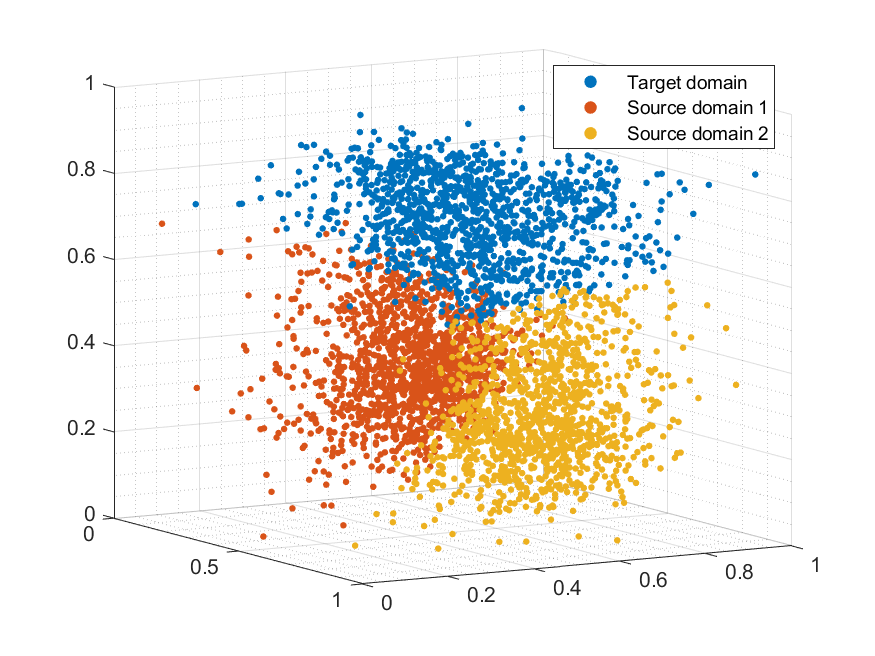}
		\caption*{\scriptsize(d) Rice 3\&4\&7}
	\end{minipage}
	\begin{minipage}{0.32\linewidth}
		\centering
		\includegraphics[width=1\linewidth]{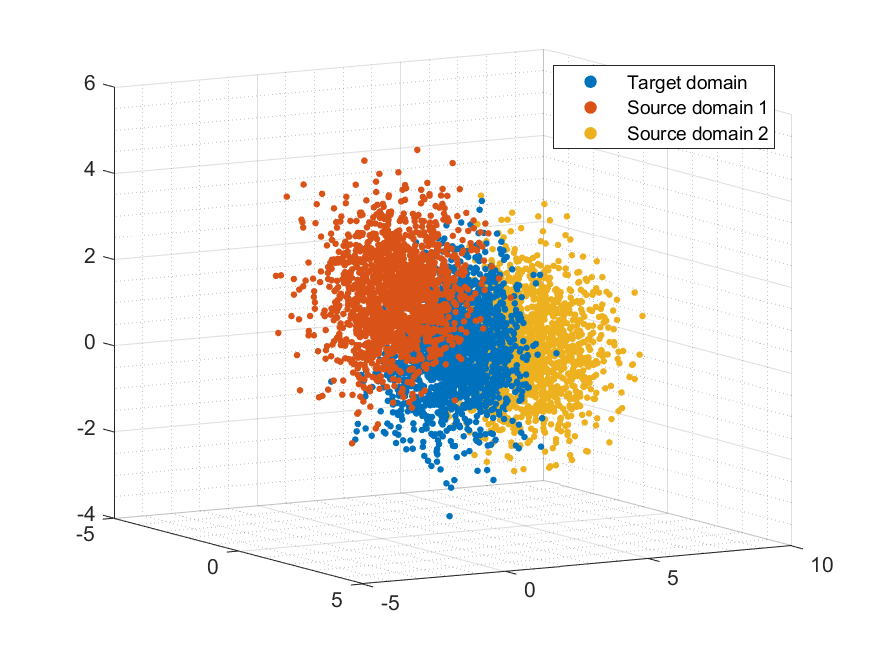}
		\caption*{\scriptsize(e) Wave 1\&7\&19}
	\end{minipage}
	\begin{minipage}{0.32\linewidth}
		\centering
		\includegraphics[width=1\linewidth]{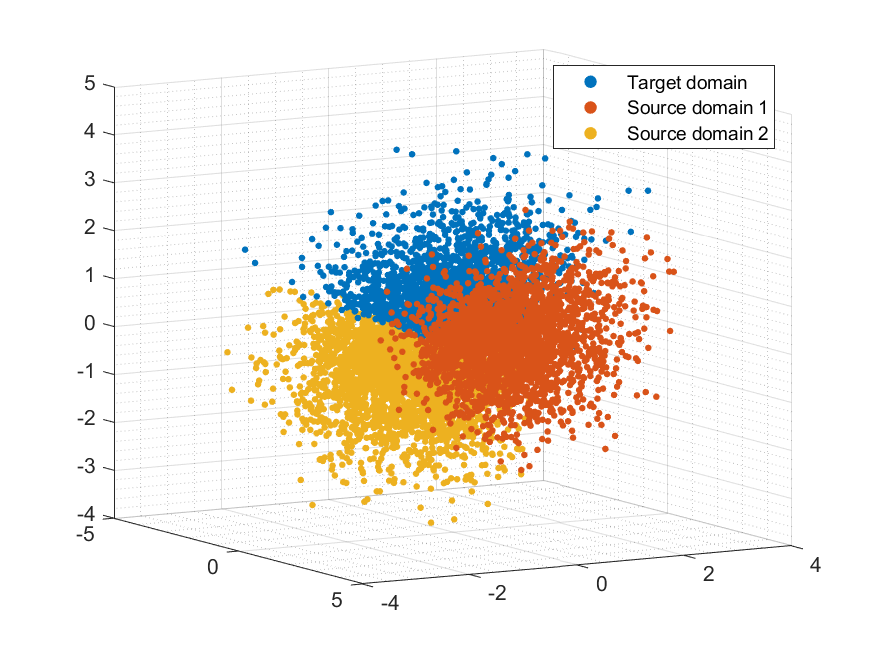}
		\caption*{\scriptsize(f) TW 10\&13\&20}
	\end{minipage}

%	\begin{minipage}{0.32\linewidth}
%		\centering
%		\includegraphics[width=1\linewidth]{TW11520}
%		\caption*{\scriptsize(h) Two Norm 1\&15\&20}
%	\end{minipage}

	\caption{These images respectively display 3-D plots of three UCI datasets clustered using $k$-means (k=3) on three features. The blue points represent the target domain, while the red and yellow points represent two source domains. Specifically, (a) and (d) are from the Rice, (b) and (e) are from the Wave, (c) and (f) are from the TwoNorm.}
	\label{UCI}
\end{figure}

\textbf{20 News}:
The 20 News \footnote{\url{http://qwone.com/~jason/20Newsgroups/}} consists of news articles covering 20
different topics and is published in the proceedings of the 12th International Conference on Machine Learning. It has since become
one of the international standard datasets used for research in
transfer learning. In this experiment, the construction of the
transfer tasks follows the same approach as \cite{duan2012domain}, where the four largest main categories ("comp", "rec", "sci", and "talk") are chosen for evaluation, and the dimension of each domain is 500.  Table \ref{20Ct} gives more details about eight multi-source tasks.\par
\begin{table}[h]
	\centering
	\begin{threeparttable}
		\caption{The construction of eight transfer tasks from 20 News}
		\label{20Ct}
		\fontsize{9}{8}\selectfont
		\begin{tabular}{c|ccccc}
			\toprule
			Task & Dimension &Source Domain 1 & Source Domain 2  & Source Domain 3 & Target domain \\
			\midrule
			CS2R & \multirow{2}{*}{500} &c.w.x \& r.s.h  & c.s.i.p.h \& s.m & \textbackslash & c.s.i.p.h \& r.m \\
			CST2R & & c.w.x \& r.s.h  & c.s.i.p.h \& s.m & c.w.x \& t.p.m & c.s.i.p.h \& r.m \\
			\midrule
			CR2S & \multirow{2}{*}{500}&c.w.x \& s.c & c.s.i.p.h \& r.m & \textbackslash & c.s.i.p.h \& s.m \\
			CRT2S & &c.w.x \& s.c & c.s.i.p.h \& r.m & c.w.x \& t.p.m & c.s.i.p.h \& s.m \\
			\midrule
			RS2C & \multirow{2}{*}{500}&c.s.i.p.h \& r.m & c.w.x \& s.m & \textbackslash & c.w.x \& r.s.h  \\
			RST2C & & c.s.i.p.h \& r.m & c.w.x \& s.m & c.w.x \& t.p.m & c.w.x \& r.s.h  \\
			\midrule
			CS2T & \multirow{2}{*}{500}&c.w.x \& t.p.m & c.w.x \& s.m & \textbackslash & c.s.i.p.h \& t.p.g \\
			CSR2T & &c.w.x \& t.p.m & c.w.x \& s.m & c.w.x \& r.s.h & c.s.i.p.h \& t.p.g \\
			\bottomrule
		\end{tabular}
	\end{threeparttable}
\end{table}

\textbf{VLSC object Image classification}:
VLSC \footnote{\url{https://github.com/HDSYW/transferlearning/blob/master/data/dataset.md\#vlsc}} contains four domains: V (VOC2007) \cite{everingham2010pascal}, L (LabelMe) \cite{russell2008labelme}, S (SUN09) \cite{choi2010exploiting} and C (Caltech) \cite{griffin2007caltech}, which are well-known object image recognition datasets. There are 5 classes: "bird", "car", "chair", "dog", and "person" in these datasets, and each of them has 4096 dimensions. For our binary classification problem, we choose "car" as class 1 and "chair" as class 0, shown in Figure \ref{VLSCFig}, where more details can be found in Table \ref{VLSCCon}.\par
\begin{table}[h]
	\centering
	\begin{threeparttable}
		\caption{The construction of transfer task on VLSC image classification}\			
		\label{VLSCCon}
		\fontsize{7}{7}\selectfont
		\begin{tabular}{c|ccccc}
			\toprule
			Task & Dimension &$D_{\mathcal{S}^1}$ & $D_{\mathcal{S}^2}$  & $D_{\mathcal{S}^3}$ & $D_{\mathcal{T}}$ \\
			\midrule
			SV2C & \multirow{2}{*}{4096} &SUN09   & VOC2007 & \textbackslash & Caltech \\
			LSV2C & &LabelMe  & SUN09  & VOC2007 & Caltech \\
			\midrule
			LV2S & \multirow{2}{*}{4096}&LabelMe & VOC2007 & \textbackslash & SUN09 \\
			LCV2S & &LabelMe & Caltech & VOC2007 & SUN09  \\
			\midrule
			SC2V & \multirow{2}{*}{4096}&SUN09 & Caltech & \textbackslash & VOC2007 \\
			SCL2V & &SUN09 & Caltech & LabelMe & VOC2007  \\
			\bottomrule
		\end{tabular}
	\end{threeparttable}
\end{table}

\begin{figure}[H]
	\centering	
	\begin{minipage}{0.40\linewidth}
		\centering
		\includegraphics[width=0.40\linewidth]{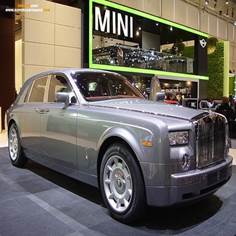}
		\includegraphics[width=0.40\linewidth]{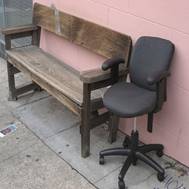}
		\caption*{(a) VOC2007}
	\end{minipage}
	\begin{minipage}{0.40\linewidth}
		\centering
		\includegraphics[width=0.40\linewidth]{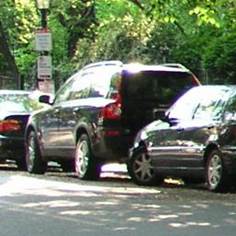}
		\includegraphics[width=0.40\linewidth]{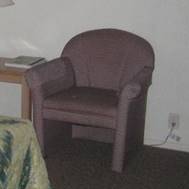}
		\caption*{(b) LabelMe}
	\end{minipage}
	
	\begin{minipage}{0.40\linewidth}
		\centering
		\includegraphics[width=0.40\linewidth]{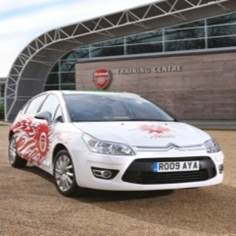}
		\includegraphics[width=0.40\linewidth]{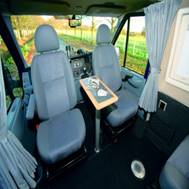}
		\caption*{(c) SUN09}
	\end{minipage}
	\begin{minipage}{0.40\linewidth}
		\centering
		\includegraphics[width=0.40\linewidth]{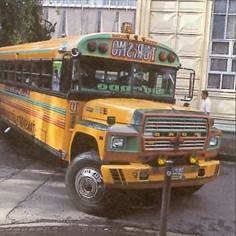}
		\includegraphics[width=0.40\linewidth]{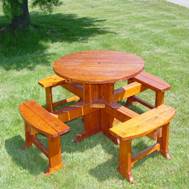}	
		\caption*{(d) Caltech}
	\end{minipage}
	\caption{These images respectively give the expamles of the Class 1 "car" and Class 0 "chair" in four domains.}
	\label{VLSCFig}
\end{figure}

\end{document}